\def\eqref#1{equation~\ref{#1}}
\def\1{\bm{1}}
\def\vh{{\bm{h}}}
\def\vp{{\bm{p}}}
\def\vx{{\bm{x}}}
\def\mC{{\bm{C}}}
\def\mD{{\bm{D}}}
\def\mE{{\bm{E}}}
\def\mF{{\bm{F}}}
\def\mH{{\bm{H}}}
\def\mI{{\bm{I}}}
\def\mK{{\bm{K}}}
\def\mL{{\bm{L}}}
\def\mM{{\bm{M}}}
\def\mP{{\bm{P}}}
\def\mS{{\bm{S}}}
\def\mT{{\bm{T}}}
\def\mU{{\bm{U}}}
\def\mV{{\bm{V}}}
\def\mW{{\bm{W}}}
\def\mX{{\bm{X}}}
\def\mZ{{\bm{Z}}}
\DeclareMathAlphabet{\mathsfit}{\encodingdefault}{\sfdefault}{m}{sl}
\SetMathAlphabet{\mathsfit}{bold}{\encodingdefault}{\sfdefault}{bx}{n}
\newcommand{\R}{\mathbb{R}}
\DeclareMathOperator{\Tr}{Tr}
\theoremstyle{plain}
\newtheorem{theorem}{Theorem}
\newtheorem{corollary}{Corollary}
\newtheorem{lemma}{Lemma}
\theoremstyle{definition}
\title{Interpolating between Clustering and Dimensionality Reduction with Gromov-Wasserstein}
\newcommand{\diag}{\operatorname{diag}}
\newcommand{\integ}[1]{{[\![#1]\!]}}
\newcommand{\GW}{\operatorname{GW}}
\newcommand{\srGW}{\operatorname{srGW}}
\newcommand{\srFGW}{\operatorname{srFGW}}
\newcommand{\scalar}[2]{\langle #1 , #2 \rangle}
\newcommand{\vectorize}{\operatorname{vec}}
\author{  Hugues Van Assel$^*$ \\
  ENS de Lyon, CNRS \\
  UMPA UMR 5669 \\
  \texttt{hugues.van\textunderscore assel@ens-lyon.fr} \\
    \And
  Cédric Vincent-Cuaz$^*$ \\
  EPFL, Lausanne  \\ 
  LTS4 \\
  \texttt{cedric.vincent-cuaz@epfl.ch} \\
    \And
  Titouan Vayer \\
  Univ. Lyon, ENS de Lyon, UCBL, CNRS, Inria \\
  LIP UMR 5668 \\
  \texttt{titouan.vayer@inria.fr} \\
  \And
  Rémi Flamary \\
  École polytechnique, IP Paris, CNRS \\
  CMAP UMR 7641 \\
  \texttt{remi.flamary@polytechnique.edu} \\
  \And
  Nicolas Courty \\
  Université Bretagne Sud, CNRS \\
  IRISA UMR 6074 \\
  \texttt{nicolas.courty@irisa.fr} \\
}
\begin{document}

\maketitle

\def\thefootnote{*}\footnotetext{Equal contribution.}\def\thefootnote{\arabic{footnote}}

\begin{abstract}
    We present a versatile adaptation of existing dimensionality reduction (DR) objectives, enabling the simultaneous reduction of both sample and feature sizes.
    Correspondances between input and embedding samples are computed through a semi-relaxed Gromov-Wasserstein optimal transport (OT) problem.
    When the embedding sample size matches that of the input, our model recovers classical popular DR models. 
    When the embedding's dimensionality is unconstrained, we show that the OT plan delivers a competitive hard clustering. 
    We emphasize the importance of intermediate stages that blend DR and clustering for summarizing real data and apply our method to visualize datasets of images.
\end{abstract}

\section{Introduction}

Summarizing the information carried by a dataset in an unsupervised way is of utmost importance in modern machine learning pipelines \cite{donoho2000high}. Smaller representations of data offer numerous advantages, including improved pattern and structure recognition, as well as faster processing for downstream tasks \cite{mendible2020dimensionality, cantini2021benchmarking, pochet2004systematic}. To construct such representations, one can either reduce the sample size by aggregating points together (referred to as \emph{clustering}) or reduce the feature dimensionality \textit{i.e.}\ performing \emph{dimensionality reduction} (DR). While both tasks are actively studied topics, very few works have proposed a consistent model to simultaneously perform clustering and DR.

\textbf{Contributions.} In this work, we provide a new framework for joint clustering and DR. The goal is to obtain a reduced representation in \emph{both samples and features} \textit{i.e.}\ a transformation $\mX \in \R^{N \times p}$ to $\mZ \in \R^{n \times d}$ where $n < N$ (clustering) and $d < p$ (DR). Doing so, we ensure that the low-dimensional embeddings align well with the class labels determined during clustering.
In \cref{sec:background}, we frame classical DR methods as minimizing a discrepancy between two aligned affinity matrices: $\mC_X$ defining the dependencies among high-dimensional samples and $\mC_Z$ focusing on low-dimensional ones. 
We then propose to augment this general objective using the Gromov-Wasserstein (GW) framework to enable matching affinities of different dimensions. 
When $\mC_Z$ has fewer nodes than $\mC_X$, computing a GW transport plan naturally amounts to a clustering of the input samples, aggregating them into \emph{prototypes}.
Therefore this model leads to a principled objective for simultaneously learning low-dimensional prototypes and their assignments to input samples.
We show in \Cref{theo:srgw_bary_concavity} that, in the context of PSD matrices used in existing DR approaches, the assignments provide a hard clustering of the input samples.
We discuss key properties advocating for the use of this clustering in \Cref{sec:background} before introducing our model in \Cref{sec:model_GWDR} and applying it to real data in \Cref{sec:exps}.

\section{Generalization of Dimension Reduction via Graph Matching}\label{sec:background}

\textbf{Unified view of Dimensionality Reduction.}
Let $\mX = (\vx_1, ..., \vx_N) ^\top \in \R^{N \times p}$ be an input dataset of interest. DR methods focus on constructing a low-dimensional representation or \emph{embedding} $\mZ \in \R^{N \times d}$, where $d$ is smaller than $p$. The latter should preserve a prescribed geometry for the dataset usually encoded via a pairwise similarity matrix $\mC_{X}$. To this end, most popular DR methods (\textit{e.g.}\ kernel PCA \cite{scholkopf1997kernel}, MDS \cite{torgerson1952multidimensional}, Laplacian eigenmaps \cite{belkin2003laplacian}, SNE-like methods \cite{hinton2002stochastic}) optimize $\mZ$ such that its similarity matrix $\mC_Z$ matches $\mC_X$ in accordance with the following objective:
\begin{align}\label{eq:DR_criterion}
	\mathcal{J}_L(\mC_X, \mC_Z) \coloneqq \sum_{(i,j) \in \integ{N}^2} L([\mC_X]_{ij}, [\mC_Z]_{ij})
\end{align}
where $L:\R \times \R \rightarrow \R_+$ is typically the quadratic loss $L_2(x,y) \coloneqq (x - y)^2$ or the generalized Kullback-Leibler divergence $L_{\mathrm{KL}}(x,y) \coloneqq x \log (x/y) - x + y$. As detailed in \Cref{sec:DR_methods}, the definitions of $\mC_X$ and $\mC_Z$ as well as $L$ are what differentiate each method.
Note that these objectives can be derived from a common Markov random field model with various graph priors
\cite{van2022probabilistic}.
The unified objective \cref{eq:DR_criterion} can also be seen as a trivial instance of graph matching where both graph structures $\mC_Z$ and $\mC_X$ are designed so that their nodes are aligned. To promote clustering from this objective, one can enforce $\mC_{Z}$ to have fewer nodes than $\mC_{X}$ $(n< N)$ and seek for meaningful structural correspondences between the nodes of both graphs.

\textbf{Gromov-Wasserstein framework.}
Interestingly, the Optimal Transport (OT, \cite{villani2009optimal,peyre2019computational}) literature provides a way to do so with the Gromov-Wasserstein discrepancy (GW, \cite{memoli2011gromov,sturm2012space, peyre2016gromov, chowdhury2019gromov}). In this context, nodes are endowed with probability weights $\overline{\vh}_X \in \Sigma_N$ and $\overline{\vh}_Z \in \Sigma_n$ encoding their relative importance. GW then computes a soft-assignment matrix between the nodes of the two graphs $(\mC_X, \vh_X)$ and $(\mC_Z, \vh_Z)$,  as well as a notion of dissimilarity between them reading as:
\begin{equation}\label{eq:gw_pb}
	\mathrm{GW}_L(\mC_X, \vh_X, \mC_Z, \vh_Z ) \coloneqq \min_{\mT \in \mathcal{U}(\vh_X,\vh_Z)} \:  \sum_{(i,j) \in \integ{N}^2} \sum_{(k,l) \in \integ{n}^2} L([\mC_X]_{ij}, [\mC_Z]_{kl}) T_{ik} T_{jl}
\end{equation}
where $\mathcal{U}(\vh_X, \vh_Z) = \left\{ \mT \in \R_+^{N \times n} | \mT \bm{1}_n = \vh_X, \mT^\top \bm{1}_N = \vh_Z \right\}$. An optimal coupling $\mT^*$ acts as a soft matching of the nodes, which tends to associate pairs of nodes that have similar pairwise relations in $\mC_X$ and $\mC_Z$ respectively. These properties are clear benefits for many ML tasks such as alignments of diverse structured objects \cite{ solomon2016entropic,alvarez2018gromov, xu2019gromov, demetci2020gromov, bonet2021subspace}, (co-)clustering  \cite{peyre2016gromov, titouan2020co}, graph representation learning \cite{xu2020gromov, vincent2021online, liu2022robust, vincent2022template, pmlr-v202-zeng23c} and partitioning \cite{xu2019scalable, chowdhury2021generalized}. The latter is in line with our objectives as it focuses on the design of a target graph $(\overline{\mC}, \overline{\vh})$, so that the OT resulting from $\GW(\mC_X, \vh_X, \overline{\mC}, \overline{\vh})$ provides a most significant clustering of the nodes in $(\mC_X, \vh_X)$.
A first axiom consisted in fixing $\overline{\mC}$ and optimizing its nodes' relative importance $\overline{\vh}$ modeling cluster proportions \cite{vincent2021semi}.
This problem is efficiently tackled using the semi-relaxed GW divergence (srGW) which interest boils down to minimizing the GW loss in \cref{eq:gw_pb} over $\mathcal{U}_n(\vh_X) = \left\{\mT \in \R_{+}^{N \times n}| \mT \bm{1}_n = \vh_X\right\}$. We argue that a better approach consists of also learning the target structure so that its entries would describe connectivity between clusters allowing a sharper graph partitioning. Which leads to the following optimization problem:
\begin{equation}\label{eq:srgw_bary} \tag{srGWB}
	\min_{\overline{\mC} \in \R^{n \times n}} \: \srGW_L(\mC_X, \vh_X, \overline{\mC}) \Leftrightarrow \min_{\overline{\mC} \in \R^{n \times n} , \overline{\vh} \in \Sigma_n} \GW_L(\mC_X, \vh_X, \overline{\mC}, \overline{\vh}) \:. 
\end{equation}
This amounts to searching for the closest graph $(\overline{\mC}, \overline{\vh})$ of size $n$ to the input graph $(\mC_X, \vh_X)$ in the GW sense. As such, it is a specific instance of srGW barycenter over a single input graph \cite{vincent2021semi}. We next study whether \ref{eq:srgw_bary} admits OT which are actual membership matrices (with a single non null value per row) achieving hard clusterings of the nodes of $\mC_X$.
\setcounter{theorem}{0}
\begin{theorem}\label{theo:srgw_bary_concavity}
		Let $\mC_X \in \R^{N \times N}$ and $\vh_X \in \Sigma_N^*$ a vector in the probability simplex. If $g(\mU)= \vectorize(\mU)^\top \left(\mC_X \otimes_K \mC_X\right) \vectorize(\mU)$
				is \underline{convex} on $\mathcal{U}(\vh_X, \vh_X)$ , then \ref{eq:srgw_bary} with $L=L_2$ admits scaled membership matrices as optimum. \end{theorem}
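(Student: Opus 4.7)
The plan is to reduce the joint optimization to a univariate problem in $\mT$, identify the reduced objective as $g\circ\mU$ for an appropriate nonlinear map $\mU$, and then leverage the convexity hypothesis on $g$ to push the minimum to a vertex of the feasible polytope $\mathcal{U}_n(\vh_X)$.

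First I expand the $L_2$ cost. Writing $\overline{\vh} := \mT^\top \bm{1}_N$, the squared discrepancy decomposes as
\begin{equation*}
F(\mT, \overline{\mC}) = \|\mC_X\|_{\vh_X}^2 + \|\overline{\mC}\|_{\overline{\vh}}^2 - 2\, \langle \mC_X \mT \overline{\mC}, \mT\rangle,
\end{equation*}
where $\|\mA\|_\vh^2 := \sum_{ij} A_{ij}^2 h_i h_j$. First-order optimality in $\overline{\mC}$ yields the closed form $\overline{C}_{kl}^*(\mT) = [\mT^\top \mC_X \mT]_{kl}/(\overline{h}_k \overline{h}_l)$ on the support of $\overline{\vh}$ (entries off the support are irrelevant). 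Introducing $\mU(\mT) := \mT \diag(1/\overline{\vh})\mT^\top$, a short trace computation shows
\begin{equation*}
\sum_{kl} \frac{[\mT^\top \mC_X \mT]_{kl}^2}{\overline{h}_k \overline{h}_l} = \tr\bigl(\mC_X \mU(\mT)\mC_X \mU(\mT)\bigr) = g(\mU(\mT)).
\end{equation*}
Since $\mU(\mT)$ is symmetric and $\mU(\mT)\bm{1}_N = \mT\diag(1/\overline{\vh})\overline{\vh}=\mT\bm{1}_n = \vh_X$, it lies in $\mathcal{U}(\vh_X,\vh_X)$; the problem \ref{eq:srgw_bary} thus reduces to $\min_{\mT\in\mathcal{U}_n(\vh_X)} \bigl[\|\mC_X\|^2_{\vh_X} - g(\mU(\mT))\bigr]$.

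Next I identify the vertices of $\mathcal{U}_n(\vh_X)$. Writing $T_{ik} = h_{X,i}\,\mu_{ik}$ with rows $\mu_i\in\Sigma_n$, the polytope factors as a product of $N$ rescaled simplices, so its vertices are precisely the scaled membership matrices $T^\pi_{ik}=h_{X,i}\mathds{1}\{\pi(i)=k\}$ indexed by assignments $\pi:\integ{N}\to\integ{n}$. The decisive step is then to prove that the maximum of $\mT\mapsto g(\mU(\mT))$ over $\mathcal{U}_n(\vh_X)$ is attained at such a vertex. Any $\mT$ admits the product decomposition $\mT = \sum_\pi \lambda_\pi \mT_\pi$ with $\lambda_\pi := \prod_i \mu_{i,\pi(i)}$ forming a probability distribution over assignments; if one establishes the Jensen-type bound
\begin{equation*}
g(\mU(\mT)) \;\leq\; \sum_\pi \lambda_\pi\, g(\mU(\mT_\pi)),
\end{equation*}
then at least one vertex satisfies $g(\mU(\mT_{\pi^\star})) \geq g(\mU(\mT))$, yielding the claim.

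The main obstacle will be to prove precisely this bound. Because $\mT\mapsto\mU(\mT)$ is bilinear (through the denominator $\overline{\vh}=\mT^\top\bm{1}_N$), $\mU(\mT)$ is not a convex combination of the $\mU(\mT_\pi)$ and the convexity of $g$ does not transfer directly. The anticipated workaround is to combine the matrix convexity of the operation $\mA\mapsto\mA\mA^\top$ (applied to $\mA = \mT\diag(\overline{\vh})^{-1/2}$, which is linear in $\mT$ on each slice of fixed column marginal) with a moment calculation that controls the fluctuation of $\overline{\vh}$ across the vertices $\mT_\pi$, and to linearize $g$ using its convex-conjugate representation so that a PSD-order inequality at the matrix level translates into the scalar Jensen inequality above. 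It is exactly in this closing step that the hypothesis ``$g$ is convex on $\mathcal{U}(\vh_X,\vh_X)$'' is used in full strength; without it, the reduced objective need not admit vertex optima and the clustering interpretation breaks down.
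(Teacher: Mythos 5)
Your reduction is sound and matches the paper's: eliminating $\overline{\mC}$ via the first-order conditions, recognizing the reduced objective as $\|\mC_X\|^2_{\vh_X}-g(\mU(\mT))$ with $\mU(\mT)=\mT\diag(\overline{\vh})^{-1}\mT^\top\in\mathcal{U}(\vh_X,\vh_X)$, and identifying the vertices of $\mathcal{U}_n(\vh_X)$ as the scaled membership matrices are exactly the paper's Lemmas 1 and 2 (plus the characterization of extremities via \cite{cao2022centrosymmetric}). But the decisive step is missing. You correctly observe that $\mU(\mT)$ is not an affine function of $\mT$, so convexity of $g$ does not transfer, and you then defer the entire difficulty to an unproven ``Jensen-type bound'' $g(\mU(\mT))\leq\sum_\pi\lambda_\pi g(\mU(\mT_\pi))$. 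The sketched workaround does not close this gap: the map $\mT\mapsto\mT\diag(\overline{\vh})^{-1/2}$ is \emph{not} linear along the segment joining $\mT$ to the vertices $\mT_\pi$, because the column marginals $\overline{\vh}_\pi=\mT_\pi^\top\bm{1}_N$ differ from vertex to vertex (and some may vanish, so $\mU(\mT_\pi)$ is not even defined without the boundary convention the paper sets up separately). Moreover, even granting a PSD-order inequality between $\mU(\mT)$ and $\sum_\pi\lambda_\pi\mU(\mT_\pi)$, passing to the scalar inequality would require $g$ to be monotone for the Loewner order, which is not part of the hypothesis; the assumption is only convexity of $g$ on $\mathcal{U}(\vh_X,\vh_X)$.

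The paper closes this step quite differently and more elementarily: it shows directly that $f(\mT)=g(\mU(\mT))$ lies above its tangents on the relative interior $\overset{\circ}{\mathcal{U}}_n(\vh_X)$. Concretely, it computes $\nabla f$ in closed form, expands $f(\mT^{(1)})-f(\mT^{(2)})-\langle\nabla f(\mT^{(2)}),\mT^{(1)}-\mT^{(2)}\rangle$, completes squares so that the remainder is a sum of nonnegative terms plus the quantity
\begin{equation*}
g(\mU^{(1)})+g(\mU^{(2)})-2\Tr\bigl(\mU^{(1)}\mC_X^\top\mU^{(2)}\mC_X\bigr),
\end{equation*}
and this last quantity is nonnegative precisely by \emph{midpoint} convexity of $g$ applied to the single pair $\mU^{(1)},\mU^{(2)}$ --- no decomposition over the exponentially many vertices is needed. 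Concavity is then extended from the interior to all of $\mathcal{U}_n(\vh_X)$ by the continuity of $\mathcal{F}$ established in the first lemma, and a concave function on a polytope attains its minimum at an extreme point. If you want to salvage your route, note that your Jensen bound is a \emph{consequence} of the convexity of $f$ on the polytope, so you would in any case have to prove that convexity first; the tangent-line argument above is the concrete way to do it.
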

The sufficient condition in Theorem \ref{theo:srgw_bary_concavity} is satisfied for existing DR methods (\Cref{sec:DR_methods}), \emph{e.g} when $\mC_X$ is PSD (or NSD). In this setting, this result completes the analysis of  \cite{chen2023gromov} establishing that \ref{eq:srgw_bary} constrained to membership matrices as OT is a SOTA graph coarsening method for spectrum preservation, equivalent to a weighted kernel K-means \cite{dhillon2004kernel, dhillon2007weighted}.  Following \cite[equation 6]{vayer2018optimal},we can also see that $g$ is convex whenever the GW problem from a graph to itself is concave. Hence \cite[Proposition 2]{redko2020co} also extends our analysis to squared Euclidean distance matrices. A corollary of \Cref{theo:srgw_bary_concavity} establishes an analog result when
$\overline{\vh}$ is not optimized (\Cref{sec:srGW_concavity}).


\section{Joint Clustering and Dimensionality Reduction}\label{sec:model_GWDR}

\textbf{Dimensionality reduction with Gromov-Wasserstein.}
In light of the  results presented above on the clustering abilities of srGW, we introduce a versatile algorithm for joint clustering and dimensionality reduction.
Our method amounts to replacing the usual DR objective \cref{eq:DR_criterion} by a srGW loss \cref{eq:gw_pb} thus allowing to reduce the sample size.  Namely, we learn embeddings $\mZ$ that parametrize a structure $\mC_Z$ induced by the underlying DR method as follows:
\begin{equation}\label{eq:GW_DR_pb}\tag{GW-DR}
	\min_{\mZ \in \R^{n \times d}} \mathrm{srGW}_L(\mC_X, \vh_X, \mC_Z) \:.
\end{equation}
The embeddings $\mZ$ then act as low-dimensional prototypical representations of input samples, whose learned relative importance $\vh_Z$ accommodates clusters or substructures of varying proportions in $\mX$. When 
$\mC_Z = \mZ \mZ^\top$ mimicking e.g PCA (\Cref{sec:DR_methods}),  \ref{eq:GW_DR_pb} boils down to a srGW barycenter problem constrained to have at most rank $d$ which coincides with \ref{eq:srgw_bary} if $d \geq n$. These relations and \Cref{theo:srgw_bary_concavity} allow us to expect OT solutions close to providing a hard-clustering of $\mX$. Finally, we emphasize that the GW framework does not take into account input samples and embeddings explicitly, but only implicitly through their pairwise similarity matrices $\mC_X$ and $\mC_Z$. To readily incorporate the feature information of $\mX$ in \ref{eq:GW_DR_pb}, one can adopt the Fused GW framework \cite{pmlr-v97-titouan19a} that interpolates linearly, via a hyperparameter $\alpha \in [0,1]$, between our objective and a linear OT cost that matches samples $\mX \in \R^{N \times p}$ and a learned feature matrix $\overline{\mF} \in \R^{n \times p}$. The latter essentially reduces to a concave problem, wherein the goal is to achieve K-means clustering on $\mX$ \cite{canas2012learning}, hence acting as a concave regularization of \ref{eq:GW_DR_pb} (see details in \cref{sec:FGW_DR}).

\textbf{Computation.}
\ref{eq:GW_DR_pb} is a non-convex problem that we propose to tackle using a Block Coordinate Descent algorithm (BCD, \cite{tseng2001convergence}) guaranteed to converge to local optimum \cite{grippo2000convergence, Lyu2023bmm}. The BCD alternates between \emph{i)} solving for a srGW problem given $\mZ$ using the Conditional Gradient solver in \cite{vincent2021semi} extended to support $L_{KL}$; \emph{ii)} optimizing $\mZ$ for a fixed OT using gradient descent with adaptive learning rates \cite{kingma2014adam}. Each update is achieved in $\mathcal{O}(nN^2 + n^2N)$ operations. 
\textbf{Related work.}
The closest to our work is the COOT-clustering approach proposed in \cite{redko2020co} that estimates simultaneously a clustering of samples and variables using the  CO-Optimal Transport problem.
The key difference is that we leverage the affinity matrices and kernels of existing DR methods instead of aligning the features.
Other approaches such as \cite{liu2022joint} involve modelling latent variables with mixture distributions. Note that none of the previously proposed methods can easily adapt to the mechanisms of existing DR methods like \cref{eq:GW_DR_pb}.


\section{Experiments}\label{sec:exps}

\begin{wraptable}[6]{R}{5cm}
    \vspace*{-0.4cm}
	\centering
	\caption{ARI (\%) clustering scores.}\label{tab:GWB_clustering}
	\scalebox{0.9}{
		\begin{tabular}{|c||c|c|} \hline
			& $\mathrm{srGWI}$ & $\mathrm{srGWB}$ \\ \hline \hline
			MNIST & 29.7(1.9) & \textbf{32.6}(\textbf{1.8})\\ \hline
			F-MNIST & 26.1(0.0) & \textbf{39.5}(\textbf{0.3}) \\ \hline
			COIL & 18.1(0.2) & \textbf{51.0}(\textbf{1.7}) \\ \hline
		\end{tabular}}
\end{wraptable}
In this section, we showcase the relevance of our approach on popular image datasets: COIL-20 \cite{nene1996columbia}, MNIST and fashion-MNIST \cite{xiao2017fashion}. 
Results are averages and standard deviations, computed over 5 runs with different random seeds. Details about evaluation metrics and datasets are provided in \Cref{sec:appendix_exps}.
Throughout this section, we set $\vh_X$ as uniform.
In what follows, for any existing DR method, we refer to its gromovized version by appending the prefix "GW" to the method name \textit{e.g.}\ GW-PCA.

\textbf{Clustering.} 
We first evaluate the clustering abilities of srGW barycenters (\ref{eq:srgw_bary}) and their vanilla counterpart with fixed structure $\mI_n$ used in the graph partitioning literature (srGWI, \cite{vincent2021semi}). For both, $\mC_X$ is taken as the MDS kernel (see \Cref{sec:DR_methods}). Clustering performances measured by means of ARI are reported in \Cref{tab:GWB_clustering} and show the superiority of srGWB.

\begin{wraptable}[6]{R}{8.1cm}
    \vspace*{-0.4cm}
    \centering
    \caption{Homogeneity ($\times 100$) scores for GW-tSNEkhorn.}
    \label{tab:homogeneity_scores}
    \scalebox{0.9}{
	\begin{tabular}{|l|c|c|c|c|} \hline
    & $n=10$ & $n=50$ & $n=100$ & $n=200$ \\
    \hline
    \hline
    MNIST & $49.2(1.5)$ & $76.8(1.1)$ & $80.8(0.6)$ & $83.8(0.8)$ \\ \hline
    F-MNIST & $56.0(2.4)$ & $68.9(0.7)$ & $69.8(1.4)$ & $71.9(1.6)$ \\ \hline
    COIL & $55.8(1.2)$ & $77.9(3.2)$ & $82.2 (3.6)$ & $85.3(2.9)$ \\ \hline
    \end{tabular}}
\end{wraptable}

\textbf{Joint Clustering and Dimensionality Reduction.}
In \Cref{fig:visu_gwdr}, we display the prototypes produced by GW-tSNEkhorn (robust version of tSNE presented in \cite{van2023snekhorn}) for various $n$. We used fused srGW \cite{vayer2018optimal} with $\alpha=0.5$ as it naturally produces prototypes in input space (as Wasserstein barycenters of images) that can be visualized.
They show the relatively effective purity of the prototypes confirmed by the homogeneity scores displayed in \Cref{tab:homogeneity_scores} for various $n$. 
Recall that $n$ only provides an upper bound of the number of prototypes as the semi-relaxed OT problem permits the flexibility to discard unnecessary prototypes. 
The latter scores compute to which extent prototypes contain samples of the same label. It's reasonable to note that as the value of $n$ increases, the consistency or similarity among the prototypes also increases.
\begin{figure*}[t]
	\begin{center}
		\centerline{\includegraphics[width=0.84\columnwidth]{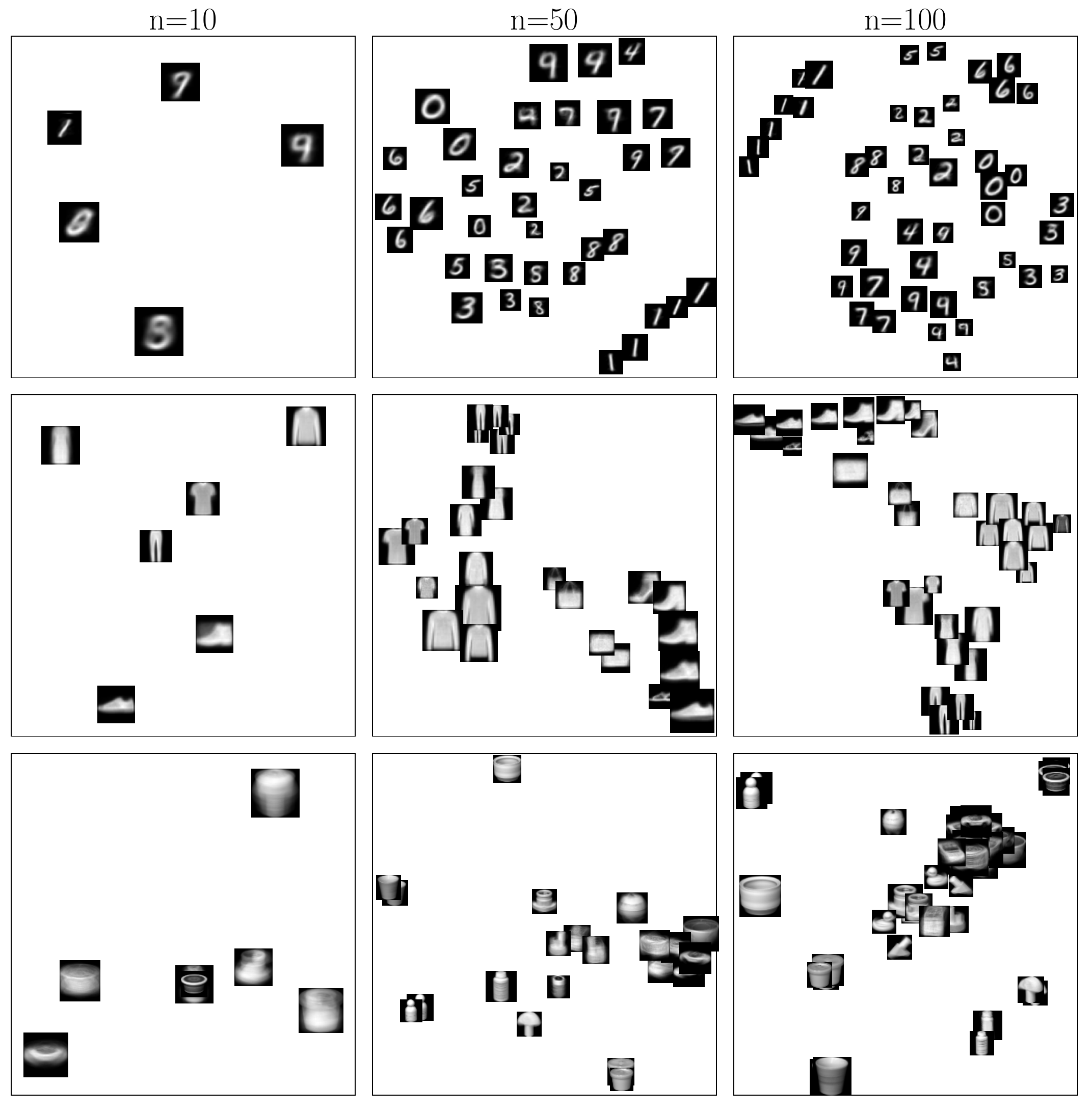}}
		\caption{2D Embeddings of GW-tSNEkhorn applied to MNIST (top), Fashion-MNIST (middle) and COIL (bottom) with various $n$. The perplexity is set to  $\xi=50$ for all experiments. Images for prototypes are computed as Wasserstein barycenters of the associated input images. Their areas are proportional to $\vh_Z$.}
		\label{fig:visu_gwdr}
	\end{center}
	\vspace{-0.8cm}
\end{figure*}

\begin{wraptable}[6]{R}{7.15cm}
    \vspace{-0.4cm}
    \centering
    \caption{Best fused GW parameter $\alpha$ for $n=50$.}\label{tab:alpha_scores}
    \scalebox{0.9}{
        \begin{tabular}{|c||c|c|c|} \hline
            &$\alpha^*$ & Homogeneity & Silhouette  \\ \hline \hline
            MNIST & $0.9997$ & $74.7(0.2)$ & $16.4(5.6)$  \\ \hline
            F-MNIST &  $1$ & $61.5(1.6)$ & $16.3(3.6)$  \\ \hline
            COIL & $0.999999$ & $87.51(0.1)$ & $42.1(5.6)$  \\ \hline
    \end{tabular}}
\end{wraptable}
\textbf{Should clustering depend on embeddings?}
Choosing the fused GW hyperparameter as $\alpha \to 0$ would result in the
clustering ignoring the current positions of embeddings and only leveraging
information about the input $\mX$ (pure clustering). To determine whether this can be beneficial,
we performed a grid search over different values of $\alpha$ (details in
\cref{sec:appendix_exps}). We selected the value $\alpha^*$ that maximizes the
sum of homogeneity and silhouette scores \cite{rousseeuw1987silhouettes}. The
latter is computed based on a ground truth taken as the most represented input
label in the associated prototype. Thus it gives a quantitative metric to
properly evaluate the prototypes' relative positions.
Best scores and their respective $\alpha^*$ are reported in \Cref{tab:alpha_scores} for GW-tSNEkhorn. These illustrate the significance of embedding-dependent clustering to ensure that the embeddings display a meaningful structure, as all $\alpha^*$ are greater than 0.

\section{Concluding Remarks}

We believe that the versatility of our approach will enable applications beyond data visualization.
For instance, the formalism associated with (sr)GW barycenters naturally allows us to consider multiple affinity matrices as inputs. In this context, popular open challenges relate to the multi-scale and multi-view dimensionality reduction problems. We envision to thoroughly investigate the latter both empirically and theoretically, building on Theorem 1 which may also conduct to new discoveries for the GW-based (multi) graph coarsening or dictionary learning.

\bibliographystyle{plain}

\newpage
\appendix

\section{Framing Dimensionality Reduction as Graph Matching}\label{sec:DR_methods}

In this section, we provide a unified view of the most popular DR methods with the following objective, where $L:\R \times \R \rightarrow \R_+$ is the loss function,
\begin{align}
	\mathcal{J}_L(\mC_X, \mC_Z) \coloneqq \sum_{(i,j) \in \integ{N}^2} L([\mC_X]_{ij}, [\mC_Z]_{ij}) \:.
\end{align}

\paragraph{Kernel PCA, MDS and Isomap.} Let us consider $\mK_X = (k(\vx_i, \vx_j))_{ij} \in \mathcal{S}^N_+$ a kernel matrix over the input data $\mX$. Denoting $R_d \coloneqq \{ \mC \in \mathcal{S}_+^{N} \: \text{s.t.} \: \mathrm{rk}(\mC) \leq d \}$ the set of rank at most $d$ PSD matrices, kernel PCA \cite{scholkopf1997kernel} computes $\mS_Z = \mathrm{Proj}^F_{R_d}(\mK_X)$. Since $\mS_Z \in R_d$, we have the existence of $\mZ \in \R^{N \times d}$ such that $\mS_Z = \mZ \mZ^\top$ (sample covariance of $\mZ$). In view of this property, the kernel PCA problem reads
\begin{align}\label{eq:kernel_PCA}
	\min_{\mZ \in \R^{n \times d}} \mathcal{J}_{L_2}(\mK_X, \mS_Z) \:.
	\tag{PCA}
\end{align}
Note that traditional PCA simply amounts to choosing $\mK_X = \mX \mX^\top$ in the above problem. Multidimensional scaling (MDS) \cite{torgerson1952multidimensional} can be easily derived from a slight variation of \ref{eq:kernel_PCA}. Define $\mD_X = - \bm{H}_N \bm{E}_X \bm{H}_N$ with $[\bm{E}_X]_{ij} = \|\vx_{i} - \vx_{j}\|_2^2$ and where $\mH_N = \mI_N - N^{-1} \mathbf{1}_N \mathbf{1}_N^\top$ is the centering matrix. Since $\mE_X$ is a squared Euclidean distance matrix, it results that $\mD_X \in \mathcal{S}^N_+$ \cite{mardia1979multivariate}. Classical MDS then amounts to minimizing the following strain.
\begin{align}\label{eq:MDS}
	\min_{\mZ \in \R^{n \times d}} \mathcal{J}_{L_2}(\mD_X, \mS_Z) \:.
	\tag{MDS}
\end{align}

\paragraph{Laplacian Eigenmaps.}
Let $\mW_X \in \mathcal{S}^N$ be a similarity graph (\textit{e.g.}\ neighborhood graph) built from $\mX$. We define its graph Laplacian as $\mL_X = \diag(\mW_X \bm{1}) - \mW_X$ such that $\mL_X \in \mathcal{S}_+^N$ \cite{chung1997spectral}.
Laplacian eigenmaps \cite{belkin2003laplacian} boils down to the following objective 
\begin{align}\label{eq:SNE}
	\max_{\mZ \in \mathrm{St}(n,d)} \mathcal{J}_{L_2}(\mL_X, \mS_Z)
	\tag{LE}
\end{align}
where $\mathrm{St}(n, d) = \{ \mU \in \R^{n \times d}, \mU^\top\mU = \mI_d \}$ is the orthogonal Stiefel manifold. This constraint prevents the embeddings from collapsing to $\bm{0}$.

\paragraph{Neighbor Embedding.} Another popular class of methods is the neighbor embedding framework. The central idea is to minimize the Kullback-Leibler divergence between two kernels $\mK_X$ and $\mK_Z$. 
\begin{align}\label{eq:SNE}
	\min_{\mZ \in \R^{n \times d}} \mathcal{J}_{L_{\mathrm{KL}}}(\mK_X, \mK_Z)
	\tag{NE}
\end{align}
Although some methods leave the kernels unnormalized (\textit{e.g.}\ UMAP by \cite{mcinnes2018umap}), the latter are usually taken as either row-stochastic (\textit{e.g.}\ SNE by \cite{hinton2002stochastic} and t-SNE by \cite{van2008visualizing}) or doubly-stochastic normalized (SNEkhorn by \cite{van2023snekhorn}). We briefly detail the latter as we rely on it in our experiments in \cref{sec:exps}. It consists in controlling the entropy in each point by solving the following OT problem
\begin{align}
	\min_{\mP \in \mathcal{H}_{\xi} \cap \mathcal{S}} \: \langle \mP, \mC \rangle \:. 
\end{align}
with $\mathcal{H}_\xi \coloneqq \{\mP \in \mathbb{R}_+^{n \times n} \ \text{s.t.} \ \mP \bm{1} = \bm{1} \: \ \text{and} \  \forall i, \: \operatorname{H}(\mP_{i:}) \geq \log{\xi} + 1 \}$ where the entropy of $\vp \in \mathbb{R}^{n}_+$ is\footnote{With the convention $0 \log 0 = 0$.} $\operatorname{H}(\vp) = -\sum_{i} p_i(\log(p_i)-1) = -\langle \vp, \log \vp - \bm{1} \rangle$. Note that at the optimum the entropy constraint is saturated thus allowing to accommodate for potentially varying noise levels while producing a doubly stochastic symmetric affinity matrix.

\section{(Semi-relaxed) Gromov-Wasserstein barycenter as a concave OT problem}\label{sec:srGW_concavity}

We consider here any graph $\mathcal{G}=(\mC, \vh)$ modeled as a connectivity matrix $\mC \in \R^{N \times N}$ and $\overline{\mC} \in \R^{n \times n}$ and a probability vector $\vh \in \Sigma_N^*$ and $\overline{\vh} \in \Sigma_n^*$. We focus next on the semi-relaxed Gromov-Wasserstein barycenter problem with an euclidean inner cost ($L = L_2$) reading as follows
\begin{equation}\label{eq:srgw_bary_supp} \tag{srGW-bary1}
	\min_{\overline{C} \in \R^{n \times n}} \srGW(\mC, \vh, \overline{\mC}) \quad \Leftrightarrow \quad \min_{\overline{\mC} \in \R^{n \times n}} \min_{\mT \in \mathcal{U}_n(\vh)} \mathcal{E}_2(\mC, \overline{\mC}, \mT)
\end{equation}
where $\mathcal{E}_2$ coincides with the objective function in \eqref{eq:gw_pb} applied in $\mC$ and $\overline{\mC}$. In general, the latter is considered as a non-convex problem. Notice that the subproblem w.r.t $\overline{\mC}$ is convex. While the subproblem w.r.t $\mT$ is in general non-convex and is equivalent to a quadractic program with Hessian matrix $\mathcal{H} = \overline{\mC}^2 \otimes_K \bm{1}_N \bm{1}_N^\top - 2 \overline{\mC} \otimes_K \mC$, where $\otimes_K$ is the kronecker product and the power operation is taken element-wise.

In the following, we proof a sufficient condition so that membership matrices are optimal for the \ref{eq:srgw_bary_supp} problem stated as such: 

\setcounter{theorem}{0}
\begin{theorem}\label{theo:srgw_bary_concavity_supp}
	Let $\mC \in \R^{N \times N}$ any bounded matrix and $\vh \in \Sigma_N^*$. Every solutions to the following problem
				\begin{equation}\label{eq:srgw_bary_equivalent_supp} \tag{srGW-bary2}
		\min_{\mT \in \mathcal{U}_n(\vh)} \mathcal{E}_2(\mC, \widetilde{\mC}(\mT), \mT) \: \text{ with } \: \forall (i, j) \in \integ{n}^2 \:, \: \overline{\mC}(\mT)_{ij} = \begin{cases} \left({\mT}^\top \mC \mT \oslash \overline{\vh} \: \overline{\vh}^\top \right)_{ij} & \text{ if  } \overline{h}_i \overline{h}_j  > 0 \\ \: 0 \: \text{ otherwise.}\end{cases}
	\end{equation}
	and $\overline{\vh}= \mT^\top \bm{1}_N $ are solutions to the \ref{eq:srgw_bary_supp} problem. Moreover If the function $g_{\mC}$ defined for any $\mU \in \mathcal{U}(\vh, \vh)$ as
	\begin{equation}
		g_{\mC}(\mU) = \mathrm{vec}(\mU)^\top \left(\mC \otimes_K \mC\right) \mathrm{vec}(\mU)
	\end{equation}
	is \underline{convex} on $\mathcal{U}(\vh, \vh)$, then the \ref{eq:srgw_bary_equivalent_supp} problem is \underline{concave} on $\mathcal{U}_n(\vh)$, hence problem \ref{eq:srgw_bary_supp} admits extremities of $\mathcal{U}_n(\vh)$ as OT solutions.
\end{theorem}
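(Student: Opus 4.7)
The proof splits naturally into the two pieces of the statement. The first is the equivalence between \ref{eq:srgw_bary_supp} and \ref{eq:srgw_bary_equivalent_supp}: I would eliminate $\overline{\mC}$ via its first-order condition. The second is concavity of the reduced objective $f(\mT) := \mathcal{E}_2(\mC,\overline{\mC}(\mT),\mT)$ on the polytope $\mathcal{U}_n(\vh)$. Concavity on a compact polytope immediately gives that the infimum is attained at an extreme point, and extreme points of $\mathcal{U}_n(\vh)$ are precisely the scaled membership matrices (each row has a single non-zero entry equal to $h_i$), so the theorem's final claim follows.

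\textbf{Reduction.} Starting from the Peyr\'e--Cuturi decomposition
\[\mathcal{E}_2(\mC,\overline{\mC},\mT) = \sum_{ij} C_{ij}^2 h_i h_j + \sum_{k\ell} \overline{C}_{k\ell}^2 \overline{h}_k \overline{h}_\ell - 2\langle \mT^\top \mC \mT, \overline{\mC}\rangle,\]
with $\overline{\vh} = \mT^\top \bm{1}_N$, the right-hand side is strictly convex quadratic in $\overline{\mC}$ for fixed $\mT$. On the support of $\overline{\vh}\,\overline{\vh}^\top$ its unique critical point is $\overline{C}_{k\ell}^\star(\mT) = [\mT^\top \mC \mT]_{k\ell}/(\overline{h}_k \overline{h}_\ell)$; outside that support the objective is insensitive to $\overline{C}_{k\ell}$ and I set it to zero. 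Substituting yields $f(\mT) = \sum_{ij} C_{ij}^2 h_i h_j - \sum_{k\ell}[\mT^\top \mC \mT]_{k\ell}^2/(\overline{h}_k \overline{h}_\ell)$, which is exactly the objective of \ref{eq:srgw_bary_equivalent_supp}, thereby proving the first assertion.

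\textbf{Concavity.} Assuming $\mC$ symmetric (the DR setting of interest), a direct cyclic-trace calculation gives $\sum_{k\ell} [\mT^\top \mC \mT]_{k\ell}^2/(\overline{h}_k \overline{h}_\ell) = g_\mC(\mU(\mT))$, where $\mU(\mT) := \mT\diag(1/\overline{\vh})\mT^\top$ is symmetric, positive semi-definite, and belongs to $\mathcal{U}(\vh,\vh)$ since $\mU(\mT)\bm{1}_N = \mT \diag(1/\overline{\vh})\overline{\vh} = \vh$. So $f$ concave iff $\phi := g_\mC \circ \mU$ convex on $\mathcal{U}_n(\vh)$, and I would chain two ingredients. \emph{(a) Matrix convexity of $\mU$ in the L\"owner order}: decomposing $\mU(\mT) = \sum_k \mT_{:k}\mT_{:k}^\top/(\bm{1}_N^\top \mT_{:k})$, for any $\vv$ the quadratic form $\vv^\top \bigl[\mT_{:k}\mT_{:k}^\top/(\bm{1}_N^\top\mT_{:k})\bigr]\vv = (\vv^\top \mT_{:k})^2/(\bm{1}_N^\top \mT_{:k})$ is the scalar perspective of $s\mapsto s^2$ applied to two linear forms in $\mT_{:k}$, hence convex; summing over $k$ yields matrix convexity of $\mU(\cdot)$. \emph{(b) Monotonicity of $g_\mC$ on the PSD cone for symmetric $\mC$}: for PSD $\mU_0 \preceq \mU_1$, setting $\Delta := \mU_1 - \mU_0 \succeq 0$, the identity
\[g_\mC(\mU_1) - g_\mC(\mU_0) = 2\,\Tr(\Delta \mC \mU_0 \mC) + \Tr\bigl((\Delta \mC)^2\bigr)\]
is a sum of two non-negative terms: $\mC \mU_0 \mC = (\mC \mV_0)(\mC \mV_0)^\top \succeq 0$ whenever $\mU_0 = \mV_0 \mV_0^\top$, and $\Tr((\Delta \mC)^2) = \|\mW^\top \mC \mW\|_F^2$ with $\Delta = \mW \mW^\top$. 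Combining (a), (b), and the hypothesized convexity of $g_\mC$ on $\mathcal{U}(\vh,\vh)$: for $\mT_\lambda := (1-\lambda) \mT_0 + \lambda \mT_1$,
\[g_\mC(\mU(\mT_\lambda)) \leq g_\mC\bigl((1-\lambda)\mU(\mT_0) + \lambda \mU(\mT_1)\bigr) \leq (1-\lambda) g_\mC(\mU(\mT_0)) + \lambda g_\mC(\mU(\mT_1)),\]
the first step combining (a) with the monotonicity (b), and the second invoking convexity of $g_\mC$ on the convex set $\mathcal{U}(\vh,\vh)$, to which both $\mU(\mT_\lambda)$ and the convex combination belong. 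Hence $\phi$ is convex and $f$ concave.

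\textbf{Main obstacle.} The delicate point is that $\mT \mapsto \mU(\mT)$ is \emph{not} linear, so convexity of $g_\mC$ cannot be used directly against $\mT_\lambda$. The essential bridge is monotonicity of $g_\mC$ on the PSD cone (b), which lets one ``lift'' $\mU(\mT_\lambda)$ up to the affine interpolant of $\mU(\mT_0),\mU(\mT_1)$ before invoking convexity of $g_\mC$. A minor care point is the treatment of degenerate columns $\overline{h}_k = 0$: the zero convention in $\overline{\mC}(\mT)$ is consistent with continuity of $f$ and does not affect the argument.
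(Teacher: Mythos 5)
Your proof is correct, and the concavity half takes a genuinely different route from the paper's. The paper proves concavity of $\mathcal{F}$ on $\overset{\circ}{\mathcal{U}}_n(\vh)$ by brute force: it computes $\nabla_{\mT} f$ explicitly, writes out $f(\mT^{(1)})-f(\mT^{(2)})-\scalar{\nabla f(\mT^{(2)})}{\mT^{(1)}-\mT^{(2)}}$, and shows it is nonnegative by completing squares and invoking the convexity of $g_{\mC}$ evaluated at the two lifted couplings $\mU^{(i)}=\mT^{(i)}\diag(\overline{\vh}^{(i)})^{-1}\mT^{(i)\top}$; it then extends to the boundary by a continuity/limit argument. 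You instead factor the reduced objective as $g_{\mC}\circ\mU$ and apply the composition rule: matrix convexity of $\mT\mapsto\mU(\mT)$ in the L\"owner order (via the perspective of $s\mapsto s^2$, applied columnwise), L\"owner monotonicity of $g_{\mC}$ on the PSD cone (your algebraic identity $g_\mC(\mU_0+\Delta)-g_\mC(\mU_0)=2\Tr(\Delta\mC\mU_0\mC)+\Tr((\Delta\mC)^2)$, both terms nonnegative), and the hypothesized convexity of $g_{\mC}$ on $\mathcal{U}(\vh,\vh)$. This is shorter, more conceptual, and isolates exactly where the hypothesis enters; moreover the closed perspective handles degenerate columns ($\overline{h}_k=0$ forces $\mT_{:,k}=\bm{0}$) without the paper's separate limiting argument. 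What the paper's computation buys in exchange is generality: the theorem is stated for an arbitrary bounded $\mC$, and the paper carries both $\mC$ and $\mC^\top$ terms throughout, whereas you assume $\mC$ symmetric. This restriction is removable at little cost: with $g_{\mC}(\mU)=\Tr(\mU\mC^\top\mU\mC)$ the same expansion gives $2\Tr(\mC^\top\Delta\mC\,\mU_0)+\Tr(\Delta\mC^\top\Delta\mC)$ up to cyclic rearrangement, and $\mC^\top\mU_0\mC\succeq 0$, $\Tr(\Delta\mC^\top\Delta\mC)=\|\mW^\top\mC\mW\|_F^2$ still hold for non-symmetric $\mC$, so you should state this to match the theorem's hypotheses. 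Your reduction step and the vertex characterization of $\mathcal{U}_n(\vh)$ coincide with the paper's Lemma 1 and its citation of the extreme-point description.
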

To prove Theorem \ref{theo:srgw_bary_concavity_supp}, let us begin with proving the following Lemma:
\setcounter{lemma}{0}
\begin{lemma}\label{lemma:srgw_bary_equivalentproblems}
	For any bounded matrix $\mC \in \R^{N \times N}$ and probability vector $\vh \in \Sigma_N^*$, every solutions to problem \ref{eq:srgw_bary_equivalent_supp} are solutions to problem \ref{eq:srgw_bary_supp}.
\end{lemma}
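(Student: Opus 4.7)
The idea is to swap the order of minimization in \ref{eq:srgw_bary_supp} and observe that the inner problem over $\overline{\mC}$ with $\mT$ fixed is a separable convex quadratic with a closed-form minimizer, which will turn out to be exactly the matrix $\widetilde{\mC}(\mT)$ appearing in \ref{eq:srgw_bary_equivalent_supp}. Once that identification is made, substituting back into the outer problem yields the claimed equivalence, and any $\mT^\star$ solving \ref{eq:srgw_bary_equivalent_supp} automatically produces a solution pair $(\widetilde{\mC}(\mT^\star), \mT^\star)$ of \ref{eq:srgw_bary_supp}.

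\textbf{Closed-form inner minimization.} The main computational step is to differentiate $\overline{\mC} \mapsto \mathcal{E}_2(\mC, \overline{\mC}, \mT)$ with respect to each entry $[\overline{\mC}]_{kl}$. Expanding $L_2$, one sees the entries decouple and the first-order condition reads
\[
\overline{h}_k \overline{h}_l \, [\overline{\mC}]_{kl} \;=\; (\mT^\top \mC \mT)_{kl},
\]
where $\overline{\vh} = \mT^\top \bm{1}_N$. Whenever $\overline{h}_k \overline{h}_l > 0$, this uniquely fixes $[\overline{\mC}]_{kl}$ to the entry of $\widetilde{\mC}(\mT)$ prescribed in the statement. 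Convexity of the quadratic in $\overline{\mC}$ guarantees that this critical point is a global minimizer for those entries.

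\textbf{Main obstacle: degenerate entries.} The only subtle point is the degenerate case $\overline{h}_k \overline{h}_l = 0$, where the closed-form expression is not well-defined. Here one notes that $\overline{h}_k = \sum_i T_{ik} = 0$ combined with $T_{ik} \geq 0$ forces $T_{ik} = 0$ for all $i$; consequently every summand of $\mathcal{E}_2$ involving $[\overline{\mC}]_{kl}$ carries a factor $T_{ik} T_{jl} = 0$, so the objective value does not depend on $[\overline{\mC}]_{kl}$. One is therefore free to adopt the convention $[\overline{\mC}]_{kl} = 0$ used in \ref{eq:srgw_bary_equivalent_supp} without altering the minimum value.

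\textbf{Conclusion.} Combining the two cases yields $\min_{\overline{\mC} \in \R^{n\times n}} \mathcal{E}_2(\mC, \overline{\mC}, \mT) = \mathcal{E}_2(\mC, \widetilde{\mC}(\mT), \mT)$ for every $\mT \in \mathcal{U}_n(\vh)$, and minimizing both sides over $\mT$ establishes equality of the two problems' optimal values. Hence any minimizer $\mT^\star$ of \ref{eq:srgw_bary_equivalent_supp} provides a minimizer $(\widetilde{\mC}(\mT^\star), \mT^\star)$ of \ref{eq:srgw_bary_supp}, which is the claim of the lemma.
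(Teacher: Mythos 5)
Your proposal is correct and follows essentially the same route as the paper: minimize first over $\overline{\mC}$ using the first-order conditions of the convex (entrywise-decoupled) quadratic, handle the degenerate entries with $\overline{h}_k\overline{h}_l=0$ by observing that $\overline{h}_k=0$ forces $\mT_{:,k}=\bm{0}$ so the objective is independent of those entries, and substitute back to identify the two problems. The paper's proof additionally establishes continuity of the reduced objective on the compact set $\mathcal{U}_n(\vh)$ to guarantee existence of minimizers (needed for the later concavity argument), but this is not required for the implication stated in the lemma itself.
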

\begin{proof}[Proof of \Cref{lemma:srgw_bary_equivalentproblems}]

	Let us first characterize solutions to the \ref{eq:srgw_bary_supp} problem. Let $\mT \in \mathcal{U}_n(\vh)$. We will find a minimizer of the convex function $\overline{\mC} \in \R^{n \times n} \mapsto \mathcal{E}_2(\mC, \overline{\mC}, \mT)$ that we will denote by $\overline{\mC}$. Using the first order conditions and the convexity of this function, $\overline{\mC}$ is a solution \emph{if and only if} it satisfies
		\begin{equation}\label{eq:srgw_barycenter_FOC_Cbar}
			\nabla_{\overline{\mC}} \mathcal{E}_2(\mC, \overline{\mC}, \mT) = 2 \overline{\mC} \odot \overline{\vh} \: \overline{\vh}^{\top} - 2 \mT^{\top} \mC \mT = \bm{0}\,,
		\end{equation}
		where $\overline{\vh}$ depends on $\mT$ and is defined as $\overline{\vh}  = \mT^{\top} \bm{1}_N = \left(\|\mT_{:,1}\|_1, \cdots, \|\mT_{:,n}\|_1\right)^\top$ and $\mT_{:,j} \in \R^{N}$ denotes the $j$th column of $\mT$. We define
		\begin{equation}
			\label{eq:Cbar_FOC}
			\forall (i,j) \in \integ{n}^2, \ \overline{\mC}(\mT)_{ij} = \begin{cases} \left({\mT}^\top \mC \mT \oslash \overline{\vh} \: \overline{\vh}^\top \right)_{ij} & \text{ if  } \overline{h}_i \overline{h}_j = \|\mT_{:,i}\|_1 \|\mT_{:,j}\|_1 > 0 \\ 0 \text{ otherwise.}\end{cases}
																	\end{equation}
		On one hand for $(i,j) \in\integ{n}^2$ such that 		$\overline{h}_i \overline{h}_j >0, \ \overline{\mC}(\mT)_{ij} = \left({\mT}^\top \mC \mT \oslash \overline{\vh} \: \overline{\vh}^\top \right)_{ij}$ which clearly satisfies the first order conditions \cref{eq:srgw_barycenter_FOC_Cbar}. On the other hand, for $(i,j)\in \integ{n}^2$ such that 		$\overline{h}_i \overline{h}_j = 0$, we have $\mT_{:,i}= \mathbf{0}$ or $\mT_{:,j} = \mathbf{0}$, as $\forall i,j, \: T_{ij} \geq 0$. Hence
								\begin{equation}
			(2 \overline{\mC}(\mT) \odot \overline{\vh} \: \overline{\vh}^\top - 2 \mT^{\top}  \mC \mT)_{ij} = (- 2 \mT^{\top} \mC \mT)_{ij} = -2 \mT_{:,i}^\top  \mC \mT_{:,j}  = 0
		\end{equation}
		Overall, $\overline{\mC}(\mT)$ satisfies the first order conditions and thus is minimizing $\overline{\mC} \in \R^{n \times n} \mapsto \mathcal{E}_2(\mC, \overline{\mC}, \mT)$. Consequently solutions to \cref{eq:srgw_bary_supp} can be found by minimizing \begin{equation}\label{eq:def_F_srgw2}
			\mathcal{F}: \mT \in \mathcal{U}_n(\vh) \mapsto \mathcal{E}_2(\mC, \overline{\mC}(\mT), \mT) = \sum_{ijkl} | C_{ij}- \overline{\mC}(\mT)_{kl}|^2 T_{ik} T_{jl}
		\end{equation} 
		In order to prove the existence of a minimizer of $\mathcal{F}$ we will show that it is continuous on $\mathcal{U}_n(\vh)$ and conclude by compactness of $\mathcal{U}_n(\vh)$. 
																													For any $\mT \in \mathcal{U}_n(\vh)$, we have
		\begin{equation}
						\mathcal{F}(\mT) = \sum_{ij} C_{ij}^2 h_i h_j + \sum_{kl}  \overline{\mC}(\mT)_{kl}^2 \overline{h}_k \overline{h}_l- 2 \sum_{ijkl} C_{ij} \overline{\mC}(\mT)_{kl} T_{ik} T_{jl}
		\end{equation}
		 Now by definition of $\overline{\mC}(\mT)$ it satisfies the first order conditions \cref{eq:srgw_barycenter_FOC_Cbar} and in particular
		\begin{equation}
						\forall (k,l) \in \integ{n}^2,  \ \overline{\mC}(\mT)_{ij} \overline{h}_k \overline{h}_l= \mT_{:,k}^\top  \mC \mT_{:,l}\,.
		\end{equation}
		Thus $\sum_{kl}  \overline{\mC}(\mT)_{kl}^2 \overline{h}_k \overline{h}_l = \sum_{kl}  \overline{\mC}(\mT)_{kl} \mT_{:,k}^\top  \mC \mT_{:,l} =  \sum_{ijkl} C_{ij} \overline{\mC}(\mT)_{kl} T_{ik} T_{jl}$. Consequently the two last terms of $\mathcal{F}(\mT)$ simplify and we can reformulate
		\begin{equation}\label{eq:def_F_factorized}
			\begin{split}
				\mathcal{F}(\mT) &= \sum_{ij} C_{ij}^2 h_i h_j -\sum_{kl}  \overline{\mC}(\mT)_{kl} \mT_{:,k}^\top  \mC \mT_{:,l} \\
								&= \sum_{ij} C_{ij}^2 h_i h_j - \sum_{\begin{smallmatrix} k: \overline{h}_k \neq 0 \\ l: \overline{h}_l \neq 0 \end{smallmatrix}}  \frac{(\mT_{:,k}^\top  \mC \mT_{:,l})^{2}}{\overline{h}_k \overline{h}_l}
			\end{split}
		\end{equation}
		which is continuous on $\mathcal{U}_n(\vh)$.
		
		Therefore we ensured that \ref{eq:srgw_bary_supp} admits solutions of the form $(\mT, \overline{\mC}(\mT))$ where $\overline{\mC}(\mT)$ satisfies \eqref{eq:Cbar_FOC}. Moreover, these solutions can be found by minimizing w.r.t $\mT \in \mathcal{U}_n(\vh)$ the function $\mathcal{F}$  defined in \eqref{eq:def_F_srgw2}, which coincides with the problem \ref{eq:srgw_bary_equivalent_supp}.

\end{proof}

\paragraph{Concavity analysis.} The proof of Theorem \ref{theo:srgw_bary_concavity_supp} consists in studying the concavity on $\mathcal{U}_n(\vh)$ of the objective function $\mathcal{F}$ involved in problem
\ref{eq:srgw_bary_equivalent_supp}. To this end, we will prove that $\mathcal{F}$ is above its tangents. However,  we can see from \eqref{eq:def_F_factorized} that $\mathcal{F}$ is only differentiable on $\mathcal{U}_n(\vh) \backslash \overset{\circ}{\mathcal{U}}_n(\vh) $, where
\begin{equation}
\overset{\circ}{\mathcal{U}}_n(\vh) := \left\{\mT \in \mathcal{U}_n(\vh) \: | \: \mT^\top \bm{1}_N = \overline{\vh} > \bm{0}_n\right\}
\end{equation}
 is a convex subset of $\mathcal{U}_n(\vh)$. As $\mathcal{F}$ reads as a sum of rational functions whose respective denominator $\overline{h}_k \overline{h}_l = 0$ if and only if $\overline{h}_k = 0$ or $\overline{h}_l = 0$. Then we will first study the concavity of $\mathcal{F}$ on $\overset{\circ}{\mathcal{U}}_n(\vh)$. Then we will conclude on the concavity of $\mathcal{F}$ on $\mathcal{U}_n(\vh)$ by an argument of continuity.
Notice that the concavity of $\mathcal{F}$ on $\mathcal{U}_n(\vh)$ is equivalent to the convexity of the function 
\begin{equation} \label{eq:def_f}
	f: \mT \in \mathcal{U}_n(\vh) \mapsto  \sum_{\begin{smallmatrix} k: \overline{h}_k \neq 0 \\ l: \overline{h}_l \neq 0 \end{smallmatrix}}  \frac{(\mT_{:,k}^\top  \mC \mT_{:,l})^{2}}{\overline{h}_k \overline{h}_l}
\end{equation}
 which we will use next for the sake of simplicity. We start by emphasizing in the following lemma a low-rank factorization of $f$ which explicits its link with a GW problem from a graph to itself:
\begin{lemma}\label{lemma:gw_lowrank}
$\mathcal{F}$ admits as an equivalent low-rank formulation
\begin{equation}\label{eq:gw_lowrank}
	\mU \in \mathcal{V}_n(\vh) \rightarrow g_{\mC}(\mU) :=  \mathrm{vec}(\mU)^\top \left(\mC \otimes_K \mC\right) \mathrm{vec}(\mU)
\end{equation}
where $\mathcal{V}_n(\vh) := \left\{\mU \in \R^{N \times N} | \exists \mT \in \overset{\circ}{\mathcal{U}}_n(\vh) \text{  s.t  } \mT^\top \bm{1}_N = \overline{\vh}, \text{  }\mU = \mT diag(\overline{\vh}^{-1}) \mT^\top \right\} \subset \mathcal{U}(\vh, \vh)$.
\end{lemma}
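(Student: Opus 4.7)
The plan is to reduce the lemma to a direct algebraic identity. Starting from the factorized form in equation (8) for $\mathcal{F}(\mT)$, the first summand $\sum_{ij} C_{ij}^2 h_i h_j$ is constant in $\mT$, so establishing the low-rank formulation reduces to rewriting
\[
f(\mT) \;=\; \sum_{\begin{smallmatrix} k : \overline{h}_k \neq 0 \\ l : \overline{h}_l \neq 0 \end{smallmatrix}} \frac{(\mT_{:,k}^\top \mC \mT_{:,l})^2}{\overline{h}_k \overline{h}_l}
\]
as $g_{\mC}(\mU(\mT))$ where $\mU(\mT) := \mT \diag(\overline{\vh}^{-1}) \mT^\top$. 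Since $\mT \in \overset{\circ}{\mathcal{U}}_n(\vh)$ forces $\overline{\vh} > \bm{0}_n$, the matrix $\diag(\overline{\vh}^{-1})$ is well defined and the full index range $\integ{n}^2$ is active.

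Next, I would introduce the intermediate matrix $\mA(\mT) := \mT^\top \mC \mT \in \R^{n \times n}$ with entries $\mA_{kl} = \mT_{:,k}^\top \mC \mT_{:,l}$, so that $f(\mT)$ is exactly the squared Frobenius norm of $\diag(\overline{\vh}^{-1/2}) \mA(\mT) \diag(\overline{\vh}^{-1/2})$. Rewriting this Frobenius norm as a trace gives
\[
f(\mT) \;=\; \tr\bigl(\diag(\overline{\vh}^{-1}) \mA(\mT)^\top \diag(\overline{\vh}^{-1}) \mA(\mT)\bigr),
\]
after which cyclicity of the trace lets me absorb the $\mT$ and $\mT^\top$ factors into $\mU(\mT)$, yielding $f(\mT) = \tr(\mU(\mT) \mC^\top \mU(\mT) \mC)$. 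I would then invoke the standard Kronecker--vec identities $(\mC \otimes_K \mC) \vectorize(\mU) = \vectorize(\mC \mU \mC^\top)$ and $\vectorize(\mX)^\top \vectorize(\mY) = \tr(\mX^\top \mY)$ to get $g_{\mC}(\mU(\mT)) = \tr(\mU(\mT)^\top \mC \mU(\mT) \mC^\top)$, and reconcile this with the previous trace expression via the symmetry of $\mU(\mT)$ together with $\tr(\mM) = \tr(\mM^\top)$, so that the equality $f(\mT) = g_{\mC}(\mU(\mT))$ holds even when $\mC$ is not symmetric.

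Finally, I would verify the inclusion $\mathcal{V}_n(\vh) \subset \mathcal{U}(\vh, \vh)$. Symmetry of $\mU(\mT)$ is immediate, and the row-marginal computation gives
\[
\mU(\mT) \bm{1}_N \;=\; \mT \diag(\overline{\vh}^{-1}) \mT^\top \bm{1}_N \;=\; \mT \diag(\overline{\vh}^{-1}) \overline{\vh} \;=\; \mT \bm{1}_n \;=\; \vh,
\]
with the column marginal identical by symmetry.

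The main obstacle is modest but real: keeping careful track of the non-symmetry of $\mC$ permitted by the hypothesis (so that orientations of $\mC$ and $\mC^\top$ are not conflated when applying the Kronecker--vec identity) and of the convention used for $\vectorize$ and $\otimes_K$. Once those conventions are fixed, every step reduces to a one-line linear algebra identity, and I expect no substantive difficulty beyond this bookkeeping.
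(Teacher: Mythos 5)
Your proof is correct and follows essentially the same route as the paper's: rewrite $f(\mT)$ as a squared Frobenius norm of $\diag(\overline{\vh}^{-1/2})\,\mT^\top \mC \mT\,\diag(\overline{\vh}^{-1/2})$, pass to a trace, use cyclicity to absorb $\mT$ and $\mT^\top$ into $\mU(\mT)=\mT\diag(\overline{\vh}^{-1})\mT^\top$, and conclude via the Kronecker--vec identity together with the symmetry of $\mU(\mT)$. Your explicit check that $\mU(\mT)\in\mathcal{U}(\vh,\vh)$ is a detail the paper merely asserts, but the argument is otherwise identical.
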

\begin{proof}[Proof of Lemma \ref{lemma:gw_lowrank}]
For any $\mT \in \overset{\circ}{\mathcal{U}}_n(\vh)$, $f$ can be expressed as 
\begin{equation}
	\begin{split}
		f(\mT)&=   \| \mD_{\overline{\vh}}^{-1/2} \mT^{\top }\mC \mT \mD_{\overline{\vh}}^{-1/2} \|_F^2 \\
		&= \Tr \left\{ \mT \mD_{\overline{\vh}}^{-1} \mT^{\top }\mC^\top \mT \mD_{\overline{\vh}}^{-1} \mT^{\top }\mC \right\}\\
		(\text{posing } \mU = \mT \mD_{\overline{\vh}}^{-1} \mT^{\top }) \quad & =  \Tr \left\{ \mU \mC^\top \mU \mC \right\}\\
		&= \mathrm{vec}(\mU^\top)^\top \left(\mC \otimes_K \mC\right) \mathrm{vec}(\mU) \\
		&= \mathrm{vec}(\mU)^\top \left(\mC \otimes_K \mC\right) \mathrm{vec}(\mU) 
		:= g_{\mC}(\mU)\\
	\end{split}
\end{equation}
where $\mathrm{vec}$ denotes the column stacking operator and $\otimes_K$ the kronecker product. Following e.g \cite[equation 6]{vayer2018optimal}, one can see that $g_{\mC}$ relates to a low-rank Gromov-Wasserstein problem for a graph $\mC$ to itself, as $(\mU = \mT \mD_{\overline{\vh}}^{-1} \mT^{\top })$ is a coupling in $\mathcal{U}(\vh, \vh)$, resulting from the "self-gluing" of $\mT \in \overset{\circ}{\mathcal{U}}_n(\vh)$ where $rank(\mT) \leq n$. 
\end{proof}
Then we establish the following result
\begin{lemma}\label{lemma:Fconcavity_interior}
If the function $g_{\mC}(\mU) = \mathrm{vec}(\mU)^\top (\mC \otimes_K \mC) \mathrm{vec}(\mU)$ is convex on $\mathcal{U}(\vh, \vh)$, then $\mathcal{F}$ is concave on $\overset{\circ}{\mathcal{U}}_n(\vh)$.
\end{lemma}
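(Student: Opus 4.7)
The plan is to establish convexity of $f$ on $\overset{\circ}{\mathcal{U}}_n(\vh)$, which (by the identity $\mathcal{F} = \sum_{ij} C_{ij}^2 h_i h_j - f$ already derived in the proof of \Cref{lemma:srgw_bary_equivalentproblems}) is equivalent to the concavity of $\mathcal{F}$ claimed in the statement. Building on \Cref{lemma:gw_lowrank}, I will write $f(\mT) = g_{\mC}(\mU(\mT))$ with $\mU(\mT) = \mT \mD_{\overline{\vh}}^{-1} \mT^\top \in \mathcal{U}(\vh, \vh)$, and combine an operator-convexity property of $\mT \mapsto \mU(\mT)$ with a Loewner-monotonicity property of $g_{\mC}$ on the PSD cone, so as to convert the hypothesized scalar convexity of $g_{\mC}$ on $\mathcal{U}(\vh, \vh)$ into the desired convexity of the composition.

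First, I would show that for any $\mT_1, \mT_2 \in \overset{\circ}{\mathcal{U}}_n(\vh)$ and $\lambda \in [0,1]$, with $\mT_\lambda = \lambda \mT_1 + (1-\lambda)\mT_2$,
\[
\mU(\mT_\lambda) \preceq \lambda\, \mU(\mT_1) + (1-\lambda)\, \mU(\mT_2),
\]
which is the joint operator convexity of the matrix perspective $(\mX,\mY)\mapsto \mX\mY^{-1}\mX^\top$. For a self-contained derivation, I would decompose $\mU(\mT) = \sum_k \overline{h}_k\, \vp_k \vp_k^\top$ with $\vp_k = \mT_{:,k}/\overline{h}_k$, check the algebraic identities $\vp_k^\lambda = \mu_k^\lambda \vp_k^1 + (1-\mu_k^\lambda)\vp_k^2$ and $\overline{h}_k^\lambda \mu_k^\lambda = \lambda \overline{h}_k^1$ for $\mu_k^\lambda := \lambda \overline{h}_k^1/\overline{h}_k^\lambda$, apply the elementary rank-one PSD identity $\lambda \vp_1\vp_1^\top + (1-\lambda)\vp_2\vp_2^\top - \vp_\lambda\vp_\lambda^\top = \lambda(1-\lambda)(\vp_1-\vp_2)(\vp_1-\vp_2)^\top \succeq \mathbf{0}$, and sum over $k$.

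Second, in the PSD $\mC$ setting that covers the DR methods of \Cref{sec:DR_methods}, $g_{\mC}(\mU) = \|\mC^{1/2}\mU\mC^{1/2}\|_F^2$ is Loewner-monotone on PSD matrices: the linear map $\mA \mapsto \mC^{1/2}\mA\mC^{1/2}$ preserves the Loewner order, and $\mA \mapsto \tr(\mA^2)$ is monotone on the PSD cone since $\tr(\mA_1^2) - \tr(\mA_0^2) = \tr((\mA_1-\mA_0)(\mA_1+\mA_0)) \geq 0$ whenever $0 \preceq \mA_0 \preceq \mA_1$. Chaining the two pieces gives
\[
f(\mT_\lambda) = g_{\mC}(\mU(\mT_\lambda)) \leq g_{\mC}\bigl(\lambda \mU(\mT_1)+(1-\lambda)\mU(\mT_2)\bigr) \leq \lambda f(\mT_1) + (1-\lambda) f(\mT_2),
\]
where the first inequality applies Loewner-monotonicity of $g_{\mC}$ to the operator-convex bound on $\mU(\mT_\lambda)$, and the second uses the hypothesized convexity of $g_{\mC}$ on $\mathcal{U}(\vh, \vh)$. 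This delivers convexity of $f$, and hence concavity of $\mathcal{F}$, on $\overset{\circ}{\mathcal{U}}_n(\vh)$.

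The main obstacle is this bridging step: bare convexity of $g_{\mC}$ on $\mathcal{U}(\vh, \vh)$ does not on its own turn the matrix inequality on $\mU$ into a scalar inequality on $g_{\mC}\circ \mU$; one genuinely needs monotonicity of $g_{\mC}$ along the Loewner order on the range of $\mU$. The squared-Frobenius representation supplies this automatically in the PSD case (and, by a sign flip, in the NSD case), while the squared-Euclidean-distance case is covered through the observation recalled after Theorem~1 that convexity of $g$ is equivalent to concavity of the self-GW problem, combined with \cite[Proposition 2]{redko2020co}.
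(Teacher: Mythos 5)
Your route is genuinely different from the paper's: you compose the operator convexity of the perspective map $\mT \mapsto \mU(\mT) = \mT \mD_{\overline{\vh}}^{-1}\mT^\top$ (which is correct, and neatly derived from the rank-one identity) with a Loewner-monotonicity property of $g_{\mC}$, whereas the paper proves concavity of $\mathcal{F}$ by showing $f$ lies above its tangents on $\overset{\circ}{\mathcal{U}}_n(\vh)$: it computes $\nabla f$ explicitly, completes squares, and reduces everything to the single cross-term inequality $g_{\mC}(\mU^{(1)}) + g_{\mC}(\mU^{(2)}) - 2\Tr\{\mU^{(1)}\mC^\top\mU^{(2)}\mC\} \geq 0$, which is exactly the statement that the quadratic form $g_{\mC}$ is nonnegative on differences of elements of $\mathcal{U}(\vh,\vh)$ and hence follows from its convexity alone. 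Your argument, when it applies, is shorter and derivative-free, and it is valid whenever $\mC$ is PSD (or NSD), which indeed covers the DR kernels of Appendix A.

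However, there is a genuine gap for the lemma as stated. The hypothesis is only that $g_{\mC}$ is convex on $\mathcal{U}(\vh,\vh)$, for an arbitrary bounded $\mC$; your bridging inequality $g_{\mC}(\mU(\mT_\lambda)) \leq g_{\mC}(\lambda\mU(\mT_1)+(1-\lambda)\mU(\mT_2))$ additionally requires $g_{\mC}$ to be monotone along the Loewner order on the range of $\mU$, which convexity does not supply. Concretely, writing $\mDelta = \lambda\mU(\mT_1)+(1-\lambda)\mU(\mT_2) - \mU(\mT_\lambda) \succeq \bm{0}$ (a symmetric PSD matrix with zero row sums), one gets $g_{\mC}(\mU+\mDelta)-g_{\mC}(\mU) = 2\Tr\{\mU\mC^\top\mDelta\mC\} + \Tr\{\mDelta\mC^\top\mDelta\mC\}$, and while the second term is nonnegative under the convexity hypothesis, the cross term has no definite sign for general $\mC$. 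You flag this yourself, but the patch you offer for the squared-Euclidean-distance case only addresses why the \emph{hypothesis} of the lemma holds there (via the concavity of the self-GW problem and \cite[Proposition 2]{redko2020co}); it does not repair the failing step in your chain of inequalities, since such $\mC$ are neither PSD nor NSD. As written, your argument therefore proves the lemma only under the stronger assumption that $\mC$ is PSD or NSD, not under the stated hypothesis; to cover the general case you would need to abandon the Loewner-monotonicity bridge and argue, as the paper does, directly from the cross-term inequality implied by convexity of the quadratic form.
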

\begin{proof}[Proof of Lemma \ref{lemma:Fconcavity_interior}] To establish the concavity of $\mathcal{F}$ on $\overset{\circ}{\mathcal{U}}_n(\vh)$, it suffices to prove that the function $f$ defined in equation \ref{eq:def_f} is convex on this set. To this end, as $f$ is in $\mathcal{C}^{1}(\overset{\circ}{\mathcal{U}}_n(\vh), \R_+)$, we will prove that it is above its tangents.
For any $(a,b) \in \integ{N} \times \integ{n}$, its first partial derivates read as
\begin{equation} \label{eq:Fgw_cost_gradient}
	\begin{split}
		&\frac{\partial}{\partial T_{ab}}f(\mT) \\
		&= \sum_{ij} \left\{  2 \left[\frac{\partial}{\partial T_{ab}}  \mT_{:, i}^\top \mC \mT_{:, j} \right]  \mT_{:, i}^\top \mC \mT_{:, j}  \frac{1}{\overline{h}_i \overline{h}_j} + \left( \mT_{:, i}^\top \mC \mT_{:, j}  \right)^2 \left[\frac{\partial}{\partial T_{ab}}  \frac{1}{\overline{h}_i \overline{h}_j}   \right]  \right\} \\
		&=  2 \sum_j  \scalar{\mC_{a, :}}{\mT_{:, j}}\mT_{:, b}^\top \mC \mT_{:, j}  \frac{1}{\overline{h}_b \overline{h}_j} + 2 \sum_i \scalar{\mC_{:, a}}{\mT_{:, i}}\mT_{:, i}^\top \mC \mT_{:, b}  \frac{1}{\overline{h}_i \overline{h}_b}\\
		&- \sum_{ij} \left( \mT_{:, i}^\top \mC \mT_{:, j}  \right)^2 \left\{ \frac{ \delta_{i=b}\overline{h}_j+ \delta_{j=b}\overline{h}_i}{\overline{h}_i^2 \overline{h}_j^2}\right\}  \\
										&=2 \sum_j  \scalar{\mC_{a, :}}{\mT_{:, j}}\mT_{:, b}^\top \mC \mT_{:, j}  \frac{1}{\overline{h}_b \overline{h}_j} + 2 \sum_i \scalar{\mC_{:, a}}{\mT_{:, i}}\mT_{:, i}^\top \mC \mT_{:, b}  \frac{1}{\overline{h}_i \overline{h}_b} \\
		&- \sum_{j} \left( \mT_{:, b}^\top \mC \mT_{:, j}  \right)^2\frac{1}{\overline{h}_b^2 \overline{h}_j} -  \sum_{i} \left( \mT_{:, i}^\top \mC \mT_{:, b}  \right)^2\frac{1}{\overline{h}_i \overline{h}_b^2} \\
	\end{split}
\end{equation}
Consider now any admissible couplings $\mT^{(1)} \in \overset{\circ}{\mathcal{U}}_n(\vh)$ and $\mT^{(2)} \in \overset{\circ}{\mathcal{U}}_n(\vh)$, we want to prove that
\begin{equation}\label{eq:tangent_inequality}
	f(\mT^{(1)}) \geq 	f(\mT^{(2)}) + \scalar{\nabla_{\mT}f(\mT^{(2)})}{\mT^{(1)} - \mT^{(2)}}_F 
\end{equation}
First observe that we have
																										\begin{equation}
	\begin{split}
		&\scalar{\nabla_{\mT}f(\mT^{(2)})}{\mT^{(1)} - \mT^{(2)}}_F\\ 
		&= \sum_{ab} (T_{ab}^{(1)} - T_{ab}^{(2)})
		\left\{
		2 \sum_j  \scalar{\mC_{a, :}}{\mT^{(2)}_{:, j}}\mT_{:, b}^{(2)\top} \mC \mT^{(2)}_{:, j}  \frac{1}{\overline{h}^{(2)}_b \overline{h}^{(2)}_j} + 2 \sum_j \scalar{\mC_{:, a}}{\mT^{(2)}_{:, j}}\mT_{:, j}^{(2)\top} \mC \mT^{(2)}_{:, b}  \frac{1}{\overline{h}^{(2)}_j \overline{h}^{(2)}_b} \right\}\\
		&-  \sum_{ab} (T_{ab}^{(1)} - T_{ab}^{(2)}) \left\{  \sum_{j} \left( \mT_{:, b}^{(2)\top} \mC \mT^{(2)}_{:, j}  \right)^2\frac{1}{\overline{h}_b^{(2)2} \overline{h}^{(2)}_j} +  \sum_{j} \left( \mT_{:, j}^{(2)\top} \mC \mT^{(2)}_{:, b}  \right)^2\frac{1}{\overline{h}^{(2)}_j \overline{h}_b^{(2)2}} \right\}\\				
				&=  
		2 \sum_{bj} \left(  \mT^{(1) \top }_{:, b}\mC\mT^{(2)}_{:, j} \right) \left( \mT_{:, b}^{(2)\top} \mC \mT^{(2)}_{:, j} \right) \frac{1}{\overline{h}^{(2)}_b \overline{h}^{(2)}_j} + 2 \sum_{bj} \left( \mT^{(1) \top }_{:, b}\mC^\top \mT^{(2)}_{:, j}\right) \left( \mT_{:, j}^{(2)\top} \mC \mT^{(2)}_{:, b}\right)  \frac{1}{\overline{h}^{(2)}_j \overline{h}^{(2)}_b} \\
		&-2 \sum_{bj}  \left(  \mT^{(2) \top }_{:, b}\mC\mT^{(2)}_{:, j} \right)^2 \frac{1}{\overline{h}^{(2)}_b \overline{h}^{(2)}_j} - 2 \sum_{bj} \left( \mT_{:, j}^{(2)\top} \mC \mT^{(2)}_{:, b} \right)^2  \frac{1}{\overline{h}^{(2)}_j \overline{h}^{(2)}_b}\\ 
		&-  \sum_{bj} \left\{  \left( \mT_{:, b}^{(2)\top} \mC \mT^{(2)}_{:, j}  \right)^2 +  \left( \mT_{:, j}^{(2)\top} \mC \mT^{(2)}_{:, b}  \right)^2\right\} \left\{  \frac{\overline{h}_b^{(1)} }{\overline{h}_b^{(2)2} \overline{h}^{(2)}_j} - \frac{1}{\overline{h}_b^{(2)} \overline{h}^{(2)}_j} \right\}  \\
		&=  
		2 \sum_{bj}  \mT^{(1) \top }_{:, b}\mC\mT^{(2)}_{:, j}\mT_{:, b}^{(2)\top} \mC \mT^{(2)}_{:, j}  \frac{1}{\overline{h}^{(2)}_b \overline{h}^{(2)}_j} + 2 \sum_{bj} \mT^{(1) \top }_{:, b}\mC^\top \mT^{(2)}_{:, j}\mT_{:, j}^{(2)\top} \mC \mT^{(2)}_{:, b}  \frac{1}{\overline{h}^{(2)}_j \overline{h}^{(2)}_b} \\
		&-  \sum_{bj} \left\{  \left( \mT_{:, b}^{(2)\top} \mC \mT^{(2)}_{:, j}  \right)^2 +  \left( \mT_{:, j}^{(2)\top} \mC \mT^{(2)}_{:, b}  \right)^2\right\}\frac{\overline{h}_b^{(1)} }{\overline{h}_b^{(2)2} \overline{h}^{(2)}_j} -2 \mathcal{F}(\mT^{(2)}) \\
	\end{split}
\end{equation}
So the difference of both terms in \eqref{eq:tangent_inequality} reads:\newline
\begin{equation}\label{eq:gw_abovetangents1}
	\begin{split}
		&f(\mT^{(1)}) - 	f(\mT^{(2)}) - \scalar{\nabla_{\mT}f(\mT^{(2)})}{\mT^{(1)} - \mT^{(2)}}_F \\
		&=\sum_{bj} \left( \mT_{:, b}^{(1)\top} \mC \mT^{(1)}_{:, j}  \right)^2 \frac{1}{\overline{h}^{(1)}_b \overline{h}^{(1)}_j} + \sum_{bj} \left( \mT_{:, b}^{(2)\top} \mC \mT^{(2)}_{:, j}  \right)^2 \frac{1}{\overline{h}^{(2)}_b \overline{h}^{(2)}_j} \\
		&+   \sum_{bj} \left\{  \left( \mT_{:, b}^{(2)\top} \mC \mT^{(2)}_{:, j}  \right)^2 +  \left( \mT_{:, b}^{(2)\top} \mC^\top \mT^{(2)}_{:, j}  \right)^2\right\}\frac{\overline{h}_b^{(1)} }{\overline{h}_b^{(2)2} \overline{h}^{(2)}_j} \\
		&-2 \sum_{bj} \left(  \mT^{(1) \top }_{:, b}\mC\mT^{(2)}_{:, j} \right) \left( \mT_{:, b}^{(2)\top} \mC \mT^{(2)}_{:, j}  \right)\frac{1}{\overline{h}^{(2)}_b \overline{h}^{(2)}_j} - 2 \sum_{bj} \left(  \mT^{(1) \top }_{:, b}\mC^\top \mT^{(2)}_{:, j} \right) \left( \mT_{:, b}^{(2)\top} \mC^\top \mT^{(2)}_{:, j} \right) \frac{1}{\overline{h}^{(2)}_j \overline{h}^{(2)}_b} \\
	\end{split}
\end{equation}
Then notice that for any $(b, j)$, we have
\begin{equation}
	\begin{split}
		& \left(\mT^{(1) \top}_{:,b} \mC \mT^{(2)}_{:, j} \right)^2\frac{1}{\overline{h}^{(1)}_b\overline{h}^{(2)}_j} +   \left( \mT_{:, b}^{(2)\top} \mC \mT^{(2)}_{:, j}  \right)^2 \frac{ \overline{h}^{(1)}_b }{\overline{h}^{(2)2}_b \overline{h}^{(2)}_j} 
		- 2 \left(\mT_{:, b}^{(1)\top} \mC \mT^{(2)}_{:, j}\right)\left(  \mT_{:, b}^{(2)\top} \mC \mT^{(2)}_{:, j} \right) \frac{1}{\overline{h}^{(2)}_b \overline{h}^{(2)}_j}   \\
		&= \left( \frac{\mT^{(1) \top}_{:,b} \mC \mT^{(2)}_{:, j}}{\sqrt{\overline{h}^{(1)}_b\overline{h}^{(2)}_j}} -  \mT_{:, b}^{(2)\top} \mC \mT^{(2)}_{:, j}  \sqrt{\frac{ \overline{h}^{(1)}_b }{\overline{h}^{(2)2}_b \overline{h}^{(2)}_j} }
		\right)^2 = (A_{bj} - B_{bj})^2\\
	\end{split}
\end{equation}
then similarly we have
\begin{equation}
	\begin{split}
		& \left(\mT^{(1) \top}_{:,b} \mC^\top \mT^{(2)}_{:, j} \right)^2\frac{1}{\overline{h}^{(1)}_b\overline{h}^{(2)}_j} +   \left( \mT_{:, b}^{(2)\top} \mC^\top \mT^{(2)}_{:, j}  \right)^2 \frac{ \overline{h}^{(1)}_b }{\overline{h}^{(2)2}_b \overline{h}^{(2)}_j} 
		- 2  \left(  \mT^{(1) \top }_{:, b}\mC^\top \mT^{(2)}_{:, j} \right) \left( \mT_{:, b}^{(2)\top} \mC^\top \mT^{(2)}_{:, j} \right) \frac{1}{\overline{h}^{(2)}_b \overline{h}^{(2)}_j}    \\
		&= \left( \frac{\mT^{(1) \top}_{:,b} \mC^\top \mT^{(2)}_{:, j}}{\sqrt{\overline{h}^{(1)}_b\overline{h}^{(2)}_j}} -  \mT_{:, b}^{(2)\top} \mC^\top \mT^{(2)}_{:, j}  \sqrt{\frac{ \overline{h}^{(1)}_b }{\overline{h}^{(2)2}_b \overline{h}^{(2)}_j} }
		\right)^2 = (A'_{bj} - B'_{bj})^2\\
	\end{split}
\end{equation}
So we can express the \eqref{eq:gw_abovetangents1} as 
\begin{equation}\label{eq:gw_abovetangents2}
	\begin{split}
		&f(\mT^{(1)}) - 	f(\mT^{(2)}) - \scalar{\nabla_{\mT}\mathcal{F}^{GW}(\mT^{(2)})}{\mT^{(1)} - \mT^{(2)}}_F \\
		&= \sum_{bj} \left( \mT_{:, b}^{(1)\top} \mC \mT^{(1)}_{:, j}  \right)^2 \frac{1}{\overline{h}^{(1)}_b \overline{h}^{(1)}_j} 
		+ \sum_{bj} \left( \mT_{:, b}^{(2)\top} \mC \mT^{(2)}_{:, j}  \right)^2 \frac{1 }{\overline{h}^{(2)}_b \overline{h}^{(2)}_j} \\
		&- \sum_{bj}\left\{  \left(\mT^{(1) \top}_{:,b} \mC \mT^{(2)}_{:, j} \right)^2 + \left(\mT^{(1) \top}_{:,b} \mC^\top \mT^{(2)}_{:, j} \right)^2 \right\} \frac{1}{\overline{h}^{(1)}_b\overline{h}^{(2)}_j} + \sum_{bj} \left\{ (A_{bj} - B_{bj})^2 + (A'_{bj} - B'_{bj})^2\right\}
	\end{split}
\end{equation}
Now let us suppose that the function $g_{\mC}$ defined in \eqref{eq:gw_lowrank} of Lemma \ref{lemma:gw_lowrank} is convex on $\mathcal{U}(\vh, \vh)$, hence including low-rank couplings of the form $\mU = \mT \mD_{\overline{\vh}}^{-1} \mT^{\top }$. Given $\mU^{(1)}= \mT^{(1)} \mD_{\overline{\vh}^{(1)}}^{-1}\mT^{(1)\top}$ and $\mU^{(2)}= \mT^{(2)} \mD_{\overline{\vh}^{(2)}}^{-1}\mT^{(2)\top}$, the convexity of $g_{\mC}$ implies that for any $\lambda \in \left[0, 1\right]$,
\begin{equation}
	\begin{split}
		&g_{\mC}(\lambda \mU^{(1)} + (1-\lambda) \mU^{(2)}) \\
		&= \Tr \left\{ \left( \lambda \mU^{(1)} + (1-\lambda) \mU^{(2)} \right) \mC^\top \left( \lambda \mU^{(1)} + (1-\lambda) \mU^{(2)} \right) \mC \right\}\\
		&= \lambda^2\Tr \left\{\mU^{(1)}  \mC^\top \mU^{(1)} \mC \right\} +(1-\lambda)^2 \Tr \left\{ \mU^{(2)}\mC^\top \mU^{(2)} \mC \right\}  + 2 \lambda(1-\lambda) \Tr \left\{  \mU^{(1)} \mC^\top \mU^{(2)}\mC \right\}\\ 
		& \leq \lambda \Tr \left\{\mU^{(1)}  \mC^\top \mU^{(1)} \mC \right\} +(1-\lambda) \Tr \left\{ \mU^{(2)}\mC^\top \mU^{(2)} \mC \right\} 
	\end{split}
\end{equation}
implying e.g for $\lambda = \frac{1}{2}$, that
\begin{equation}\label{eq:gw_lowrank_convexity}
	\begin{split}
		\Tr \left\{\mU^{(1)}  \mC^\top \mU^{(1)} \mC \right\}
		& \leq \frac{1 }{2}\Tr \left\{\mU^{(1)} \mC^\top  \mU^{(1)} \mC \right\} + \frac{1 }{2} \Tr \left\{ \mU^{(2)}\mC^\top\mU^{(2)}  \mC \right\} 
	\end{split}
\end{equation}
where for instance
\begin{equation}
	\begin{split}
		\Tr \left\{  \mU^{(1)} \mC^\top \mU^{(2)} \mC \right\} &= 	\Tr \left\{  \mT^{(1)} \mD_{\overline{\vh}^{(1)}}^{-1} \mT^{{(1)}\top } \mC^\top \mT^{(2)} \mD_{\overline{\vh}^{(2)}}^{-1} \mT^{{(2)}\top } \mC \right\} \\
		&= 	\Tr \left\{ \mD_{\overline{\vh}^{(2)}}^{-1/2} \mT^{{(2)}\top } \mC^\top  \mT^{(1)} \mD_{\overline{\vh}^{(1)}}^{-1/2} \mD_{\overline{\vh}^{(1)}}^{-1/2} \mT^{{(1)}\top } \mC \mT^{(2)} \mD_{\overline{\vh}^{(2)}}^{-1} \right\} \\
		&= 	\| \mD_{\overline{\vh}^{(1)}}^{-1/2} \mT^{{(1)}\top } \mC \mT^{(2)} \mD_{\overline{\vh}^{(2)}}^{-1} \|_F^2\\
		&= \sum_{ij} \left( \mT^{(1)\top}_{:, i} \mC \mT^{(2)}_{:, j}  \right)^2 \frac{1}{\overline{h}^{(1)}_i \overline{h}^{(2)}_j}\\
		&= \sum_{ij} \left( \mT^{(2)\top}_{:, j} \mC^\top \mT^{(1)}_{:, i}  \right)^2 \frac{1}{\overline{h}^{(1)}_i \overline{h}^{(2)}_j}\\
	\end{split}
\end{equation}
The last equality holds as $\mT^{(1)\top}_{:, i} \mC \mT^{(2)}_{:, j}  = \mT^{(2)\top}_{:, j} \mC^\top \mT^{(1)}_{:, i} $. Notice that using the same kind of relations we have
\begin{equation}\label{eq:cost_sym_relations}
	\sum_{bj}\frac{\left( \mT_{:, b}^{(1)\top} \mC \mT^{(1)}_{:, j}  \right)^2}{\overline{h}^{(1)}_b \overline{h}^{(1)}_j} = 	\sum_{bj} \frac{\left( \mT_{:, j}^{(1)\top} \mC^\top \mT^{(1)}_{:, b}  \right)^2}{\overline{h}^{(1)}_b \overline{h}^{(1)}_j} = 	\sum_{bj}\frac{\left( \mT_{:, b}^{(1)\top} \mC^\top \mT^{(1)}_{:, j}  \right)^2}{\overline{h}^{(1)}_b \overline{h}^{(1)}_j}  
\end{equation}
This way we can express the concavity inequality in \eqref{eq:gw_lowrank_convexity} as follows 
\begin{equation}\label{eq:gw_lowrank_convexity_C}
	\sum_{bj} \frac{ \left( \mT_{:, b}^{(1)\top} \mC \mT^{(1)}_{:, j}  \right)^2}{\overline{h}^{(1)}_b \overline{h}^{(1)}_j}
	+ \sum_{bj}  \frac{ \left( \mT_{:, b}^{(2)\top} \mC \mT^{(2)}_{:, j}  \right)^2}{\overline{h}^{(2)}_b \overline{h}^{(2)}_j }  - 2 \sum_{bf} \frac{\left(\mT^{(1) \top}_{:,b} \mC \mT^{(2)}_{:, j} \right)^2 }{\overline{h}^{(1)}_b\overline{h}^{(2)}_j} \geq 0
\end{equation}
and symetrically using \eqref{eq:cost_sym_relations} as
\begin{equation}\label{eq:gw_lowrank_convexity_Ct}
	\sum_{bj} \frac{ \left( \mT_{:, b}^{(1)\top} \mC^\top \mT^{(1)}_{:, j}  \right)^2}{\overline{h}^{(1)}_b \overline{h}^{(1)}_j}
	+ \sum_{bj}  \frac{ \left( \mT_{:, b}^{(2)\top} \mC^\top \mT^{(2)}_{:, j}  \right)^2}{\overline{h}^{(2)}_b \overline{h}^{(2)}_j }  - 2 \sum_{bf} \frac{\left(\mT^{(1) \top}_{:,b} \mC^\top \mT^{(2)}_{:, j} \right)^2 }{\overline{h}^{(1)}_b\overline{h}^{(2)}_j} \geq 0
\end{equation}
So we can conclude from \eqref{eq:gw_abovetangents2}, \eqref{eq:gw_lowrank_convexity_C} and \eqref{eq:gw_lowrank_convexity_Ct} that 
\begin{equation}
	\begin{split}
		&f(\mT^{(1)}) - 	f(\mT^{(2)}) - \scalar{\nabla_{\mT}f(\mT^{(2)})}{\mT^{(1)} - \mT^{(2)}}_F \\
		&\geq  \sum_{bj} \left\{ (A_{bj} - B_{bj})^2 + (A'_{bj} - B'_{bj})^2\right\}\\
		& \geq 0
	\end{split}
\end{equation}
Hence it is enough to have $g_{\mC}$ convex on $\mathcal{U}(\vh, \vh)$ to get $f$ convex  on $\overset{\circ}{\mathcal{U}}_n(\vh)$, and equivalently $\mathcal{F}$ concave on $\overset{\circ}{\mathcal{U}}_n(\vh)$.
\end{proof}
\begin{proof}[\textbf{Proof of Theorem \ref{theo:srgw_bary_concavity_supp}}]
	Following Lemma \ref{lemma:Fconcavity_interior}, if $g_{\mC}$ is convex on $\mathcal{U}(\vh, \vh)$ we know that $\mathcal{F}$ is concave on  $\overset{\circ}{\mathcal{U}}_n(\vh)$. Moreover, we also proved in Lemma \ref{lemma:srgw_bary_equivalentproblems} that $\mathcal{F}$ is continuous on $\mathcal{U}_n(\vh)$.
	
	Now let us consider any $\mE \in \mathcal{U}_n(\vh) \backslash \overset{\circ}{\mathcal{U}}_n(\vh)$, \emph{i.e} there exists at least one $i \in \integ{n}$, such that $\| \mE_{:, i} \|_1 = 0$. Let any $\mT \in \overset{\circ}{\mathcal{U}}_n(\vh)$, and any $\lambda \in \left[0, 1\right]$. As $\mathcal{U}_n(\vh)$ is compact, we can  define a sequence $\left\{\mV^{(m)} = \frac{1}{m} \mT + (1-\frac{1}{m})\mE\right\}_{m \in \mathbb{N}}$ such that $\mV^{(m)} \xrightarrow[m \rightarrow \infty]{} \mE$. By construction, $\forall m, \: \mV^{(m)} \in \overset{\circ}{\mathcal{U}}_n(\vh)$, as $\mV^{(m)} \in \mathcal{U}_n(\vh)$ by convexity and $\forall i \in \integ{n}, \: \| \mV^{(m)}_{:, i} \|_1 = \frac{1}{m} \| \mT_{:, i} \|_1 + (1-\frac{1}{m})\| \mE_{:, i} \|_1 >0$. Then we have by concavity in $\overset{\circ}{\mathcal{U}}_n(\vh)$:
	\begin{equation}
		\mathcal{F}(\lambda \mT + (1-\lambda) \mV_m) \geq \lambda \mathcal{F}(\mT) + (1-\lambda) \mathcal{F}(\mV_m)
	\end{equation}
	then by continuity of $\mathcal{F}$ on $\mathcal{U}_n(\vh)$, we have when $m \rightarrow \infty$,
	\begin{equation}
		\mathcal{F}(\lambda \mT + (1-\lambda) \mE) \geq \lambda \mathcal{F}(\mT) + (1-\lambda) \mathcal{F}(E)
	\end{equation}
	which holds for any $\mT \in \overset{\circ}{\mathcal{U}}_n(\vh)$ and any $\lambda \in \left[0, 1\right]$. Notice that the same reasoning can be done for $\mT \in \mathcal{U}_n(\vh) \backslash \overset{\circ}{\mathcal{U}}_n(\vh)$ by considering another analog sequence that converges to $\mT$. So we might conclude that $\mathcal{F}$ is concave on $\mathcal{U}_n(\vh)$. Therefore problem \ref{eq:srgw_bary_equivalent_supp} is a concave problem over a polytope, hence admits extremities of $\mathcal{U}_n(\vh)$ as minimum, and so does \ref{eq:srgw_bary_supp} thanks to Lemma \ref{lemma:srgw_bary_equivalentproblems}. Notice that one can express extremities of $\mathcal{U}_n(\vh)$ as $\left\{ \text{diag}(\vh) \mM | \mM \in \left\{0, 1\right\}^{N \times n}, \: \forall i \in \integ{N}, \: \exists ! j \in \integ{n}, M_{ij} =1 \right\}$ \cite[Theorem 1]{cao2022centrosymmetric}.
\end{proof}
\paragraph{Extension to GW.} Finally for the sake of completeness, we can follow an analog development for $\GW$ instead of $\srGW$, i.e considering the barycenter distribution fixed to $\overline{\vh} \in \Sigma_n^*$, leading to the following GW barycenter problem:
\begin{equation}\label{eq:gw_problem1} \tag{GW-bary-1}
\begin{split}
	\min_{\overline{\mC} \in \R^{n \times n }, \mT \in \mathcal{U}(\vh, \overline{\vh})}  \mathcal{E}^{GW}(\mC, \overline{\mC}, \mT) 
\end{split}
\end{equation}
Usong the same notations than in Theorem \ref{theo:srgw_bary_concavity_supp}, we can state the next result:
\setcounter{corollary}{0}
\begin{corollary}\label{corr:gwbary_concavity}
Let $\mC \in \R^{N \times N}$ any bounded matrix, $\vh \in \Sigma_N^*$ and $\overline{\vh} \in \Sigma_n^*$. If the function $g_{\mC}$ defined for any $\mU \in \mathcal{U}(\vh, \vh)$ as
\begin{equation}
	g_{\mC}(\mU) = \mathrm{vec}(\mU)^\top \left(\mC \otimes_K \mC\right) \mathrm{vec}(\mU)
\end{equation}
is \underline{convex} on $\mathcal{U}(\vh, \vh)$, then the following \ref{eq:gw_problem2} problem
\begin{equation} \label{eq:gw_problem2} \tag{GW-bary-2}
	\min_{\mT \in \mathcal{U}(\vh, \overline{\vh})} \mathcal{E}^{GW}(\mC, \widetilde{\mC}(\mT), \mT) 
\end{equation}
is \underline{concave}. Hence the \ref{eq:gw_problem1} problem
admits extremities of $\mathcal{U}(\vh, \overline{\vh})$ as optimum.
\end{corollary}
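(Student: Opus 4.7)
The plan is to mirror the proof of \Cref{theo:srgw_bary_concavity_supp}, exploiting the fact that fixing $\overline{\vh} \in \Sigma_n^*$ makes the argument strictly simpler than in the semi-relaxed case, since the denominator $\overline{h}_k \overline{h}_l$ is now uniformly bounded away from zero on the entire feasible set. First I would establish the analog of \Cref{lemma:srgw_bary_equivalentproblems}: for any $\mT \in \mathcal{U}(\vh, \overline{\vh})$, the partial minimizer of the convex map $\overline{\mC} \mapsto \mathcal{E}^{GW}(\mC, \overline{\mC}, \mT)$ is given in closed form by the first-order conditions as $\widetilde{\mC}(\mT) = (\mT^\top \mC \mT) \oslash (\overline{\vh}\,\overline{\vh}^\top)$, which is now globally well-defined without piecewise cases. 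Plugging back yields the equivalent single-variable problem
\begin{equation*}
	\min_{\mT \in \mathcal{U}(\vh, \overline{\vh})} \mathcal{F}(\mT) := \sum_{ij} C_{ij}^2 h_i h_j - \sum_{kl} \frac{(\mT_{:,k}^\top \mC \mT_{:,l})^2}{\overline{h}_k \overline{h}_l},
\end{equation*}
so that showing concavity of $\mathcal{F}$ on $\mathcal{U}(\vh, \overline{\vh})$ suffices to reduce \ref{eq:gw_problem1} to optimizing a concave function on a compact polytope.

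Next I would reuse the low-rank factorization of \Cref{lemma:gw_lowrank}: for $\mT \in \mathcal{U}(\vh, \overline{\vh})$, define $\mU := \mT\,\mathrm{diag}(\overline{\vh})^{-1}\mT^\top$. A direct check shows $\mU \bm{1}_N = \mT\,\mathrm{diag}(\overline{\vh})^{-1}\overline{\vh} = \mT \bm{1}_n = \vh$, and symmetrically $\mU^\top \bm{1}_N = \vh$, so $\mU \in \mathcal{U}(\vh, \vh)$. The same trace computation as in \Cref{lemma:gw_lowrank} gives $\sum_{kl}(\mT_{:,k}^\top \mC \mT_{:,l})^2 / (\overline{h}_k \overline{h}_l) = g_{\mC}(\mU)$. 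Then I would re-run the tangent-inequality argument of \Cref{lemma:Fconcavity_interior} verbatim, noting that the partial derivatives are now substantially simpler because $\partial \overline{h}_k / \partial T_{ab} = 0$: the terms in \eqref{eq:Fgw_cost_gradient} that came from differentiating $1/(\overline{h}_k \overline{h}_l)$ disappear entirely, and one is left with the two sums-of-squares $(A_{bj} - B_{bj})^2 + (A'_{bj} - B'_{bj})^2$ plus the two inequalities \eqref{eq:gw_lowrank_convexity_C} and \eqref{eq:gw_lowrank_convexity_Ct}, each of which is a direct consequence of the hypothesized convexity of $g_{\mC}$ on $\mathcal{U}(\vh, \vh)$.

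Finally, because every $\mT \in \mathcal{U}(\vh, \overline{\vh})$ has strictly positive marginal $\overline{\vh}$, the whole feasible polytope already plays the role of the interior $\overset{\circ}{\mathcal{U}}_n(\vh)$, so the delicate continuous-extension step at the end of the proof of \Cref{theo:srgw_bary_concavity_supp} is no longer needed — concavity on $\mathcal{U}(\vh, \overline{\vh})$ is obtained directly. The conclusion then follows since a concave function on a compact polytope attains its minimum at an extreme point of $\mathcal{U}(\vh, \overline{\vh})$, which by classical results on transportation polytopes are sparse couplings. The main obstacle, if any, is purely bookkeeping: carefully adapting the lengthy gradient expansion of \Cref{lemma:Fconcavity_interior} to the case of constant $\overline{\vh}$ and verifying that no cross terms reappear. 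Conceptually the argument is strictly easier than in \Cref{theo:srgw_bary_concavity_supp}, since it is the variability of $\overline{\vh}$ in the semi-relaxed setting that forced both the boundary analysis and the extra terms in the gradient.
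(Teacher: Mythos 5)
Your proposal is correct, but it takes a genuinely different and considerably longer route than the paper. The paper's proof of \Cref{corr:gwbary_concavity} is essentially two lines: by \Cref{theo:srgw_bary_concavity_supp}, convexity of $g_{\mC}$ on $\mathcal{U}(\vh,\vh)$ already gives concavity of $\mT \mapsto \mathcal{E}_2(\mC,\widetilde{\mC}(\mT),\mT)$ on all of $\mathcal{U}_n(\vh)$, and $\mathcal{U}(\vh,\overline{\vh})$ is a convex subset of $\mathcal{U}_n(\vh)$, so the restriction is concave and the minimum over this compact polytope is attained at an extremity. You instead re-derive the whole pipeline on the smaller polytope: the closed-form partial minimizer $\widetilde{\mC}(\mT)$ (now without the piecewise zero case since $\overline{\vh}\in\Sigma_n^*$), the low-rank identification $\mU=\mT\diag(\overline{\vh})^{-1}\mT^\top\in\mathcal{U}(\vh,\vh)$, and the tangent inequality with the simplified gradient (the terms coming from differentiating $1/(\overline{h}_k\overline{h}_l)$ vanish, or equivalently pair with the zero column sums of $\mT^{(1)}-\mT^{(2)}$). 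Both arguments are valid. What your version buys is self-containedness and the elimination of the boundary-continuity extension, which is the genuinely delicate step of the semi-relaxed proof; you also explicitly establish the equivalence between \ref{eq:gw_problem1} and \ref{eq:gw_problem2}, which the paper's corollary proof leaves implicit. What the paper's version buys is that the corollary becomes an immediate consequence of the theorem, with no bookkeeping at all — and it is worth internalizing that "concave on a set $\Rightarrow$ concave on any convex subset" is exactly the kind of observation that turns a page of computation into a sentence.
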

\begin{proof}[Proof of Corollary \ref{corr:gwbary_concavity}]
Assuming that $g_{\mC}$ is convex on $\mathcal{U}(\vh, \vh)$, implies that the \ref{eq:srgw_bary_equivalent_supp} problem is concave as the objective function $\mT \rightarrow \mathcal{E}_2(\mC, \widetilde{\mC}(\mT), \mT)$ is concave on $\mathcal{U}_n(\vh)$. Therefore this function is necessarily concave on $\mathcal{U}(\vh, \overline{\vh})$ which is a convex subset of $\mathcal{U}_n(\vh)$. So we can conclude that the \ref{eq:gw_problem2} problem is concave.
\end{proof}

\section{Additional Details for Methods and Experiments}\label{sec:appendix_exps}
\subsection{Extension to the Fused Gromov-Wasserstein Framework}\label{sec:FGW_DR}
As mentioned in Section \ref{sec:model_GWDR}, we further propose to extend \ref{eq:GW_DR_pb} to the Fused Gromov-Wasserstein framework in order to explicitly incorporate features $\mX$. It reads as follows
\begin{equation}\label{eq:FGW_DR_pb}\tag{FGW-DR}
\min_{\mZ \in \R^{n \times d}, \overline{\mF} \in \R^{n \times p}} \mathrm{srFGW}_{\alpha, L}(\mC_X, \mX, \vh_X, \mC_Z, \overline{\mF}) \:.
\end{equation}
where $\srFGW_{\alpha, L}$ relates to the semi-relaxed Fused Gromov-Wasserstein divergence parametrized by  $\alpha \in \left[0, 1\right]$ and the choice of inner-loss for $L$ taken as $L_2$ or $L_{KL}$. Following notations in Section \ref{sec:srGW_concavity}, this divergence can be expressed as follows
\begin{equation}\label{eq:fgw_pb}
\min_{\mT \in \mathcal{U}_n(\vh_X)} \:  \alpha \: \sum_{ijkl}  L([\mC_X]_{ij}, [\mC_Z]_{kl}) T_{ik} T_{jl} \: + \: (1 - \alpha ) \: \sum_{ijk} L\left(X_{ik}, \overline{F}_{jk}\right) T_{ij}
\end{equation}
where $\mathcal{U}_n(\vh_X) = \left\{ \mT \in \R_+^{N \times n} | \mT \bm{1}_n = \vh_X \right\}$. As such, srFGW
aims at finding a (semi-relaxed) optimal coupling by minimizing an OT
cost which is a trade-off of a Wasserstein cost between feature matrices and a GW cost between the similarity matrices. 
As such, \ref{eq:FGW_DR_pb} comes down to regularizing the inner OT problem with a semi-relaxed Wasserstein barycenter problem. The latter essentially reduces to a concave problem, wherein the goal is to achieve K-means clustering on $\mX$ \cite{canas2012learning}. We acknowledge that the authors do not address this problem from an optimization point of view. To this end, one can follow an analog scheme than in the proof of Theorem \ref{theo:srgw_bary_concavity_supp} in the Wasserstein setting. Similarly, the minimization w.r.t $\overline{\mF}$ of the Wasserstein barycenter objective admits closed-form solutions given $\mT \in \mathcal{U}_n(\vh_X)$, denoted $\widetilde{F}(\mT)$ \cite{vayer2018optimal}. Problem \ref{eq:FGW_DR_pb} then can be equivalently written as 
\begin{equation}\label{eq:FGW_DR_pb_equi}
\min_{\mZ \in \R^{n \times d}, \mT \in \mathcal{U}_n(\vh_X)} \:  \alpha \: \sum_{ijkl}  L([\mC_X]_{ij}, [\mC_Z]_{kl}) T_{ik} T_{jl} \: + \: (1 - \alpha ) \: \sum_{ijk} L\left(X_{ik}, \widetilde{F}_{jk}(\mT)\right) T_{ij}
\end{equation}
where the second term relates to a concave function w.r.t $\mT$,  hence acting as a concave regularization w.r.t to $\mT$ of \ref{eq:GW_DR_pb}.

\subsection{Experiments}

\paragraph{Datasets.}
We first provide details about the datasets used in \cref{sec:exps}.

\begin{table}[h]
\centering
\caption{Dataset Details.}\label{tab:dataset_details}
\scalebox{0.9}{
	\begin{tabular}{|c||c|c|c|} \hline
		& Number of samples & Dimensionality & Number of classes \\ \hline \hline
		MNIST & $10000$ & $784$ & $10$ \\ \hline
		F-MNIST & $10000$ & $784$ & $10$ \\ \hline
		COIL & $1440$ & $16384$ & $20$ \\ \hline
\end{tabular}}
\end{table}

\paragraph{Grid search for fused GW.}
As the two terms appearing in fused GW \cite{vayer2018optimal} may have different scales, we have to test a quite wide spectrum of values. For \cref{tab:alpha_scores}, we use the following grid
\begin{align}
\{0, 0.000001, 0.0003, 0.005, 0.1, 0.25, 0.5, 0.75, 0.9, 0.995, 0.9997, 0.999999, 1\} \:.
\end{align}

\paragraph{About the implementation of GWDR.}
To initialize the prototypes' position, we sample independent $\mathcal{N}(0,1)$ coordinates. Similarly, we initialize the transport plans by sampling uniform random variables in $[0,1]$ before normalizing such that the marginal constraint is satisfied.

\end{document}